\documentclass[sigconf]{}

\usepackage{algorithm}
\usepackage{algorithmic}
\usepackage{amsmath}
\usepackage{multirow}
\usepackage{multicol}
\usepackage{balance}
\usepackage{mathtools}
\usepackage{appendix}

\AtBeginDocument{%
  }

\copyrightyear{2024}
\acmYear{2024}
\setcopyright{acmlicensed}\acmConference[RecSys '24]{18th ACM Conference on Recommender Systems}{October 14--18, 2024}{Bari, Italy}
\acmBooktitle{18th ACM Conference on Recommender Systems (RecSys '24), October 14--18, 2024, Bari, Italy}
\acmDOI{10.1145/3640457.3688128}
\acmISBN{979-8-4007-0505-2/24/10}

\acmConference[Recsys '24]{18th ACM Conference on Recommender Systems}{October 14–18, 2024}{Bari, Italy}
\acmISBN{978-1-4503-XXXX-X/18/06}






\begin{document}


\title{RPAF: A Reinforcement Prediction-Allocation Framework for Cache Allocation in Large-Scale Recommender Systems}

\author{Shuo Su}
\authornote{Both authors contributed equally to this research.}
\affiliation{%
  \institution{Kuaishou Technology}
  \city{Beijing}
  \country{China}
}
\email{sushuo@kuaishou.com}

\author{Xiaoshuang Chen}
\authornotemark[1]
\orcid{0000-0003-1267-1680}
\affiliation{%
  \institution{Kuaishou Technology}
  \city{Beijing}
  \country{China}
}
\email{chenxiaoshuang@kuaishou.com}

\author{Yao Wang}
\affiliation{%
  \institution{Kuaishou Technology}
  \city{Beijing}
  \country{China}}
\email{wangyiyan@kuaishou.com}

\author{Yulin Wu}
\affiliation{%
  \institution{Kuaishou Technology}
  \city{Beijing}
  \country{China}
}
\email{wuyulin@kuaishou.com}

\author{Ziqiang Zhang}
\affiliation{%
  \institution{Tsinghua University}
  \city{Beijing}
  \country{China}
}
\email{zzq23@mails.tsinghua.edu.cn}

\author{Kaiqiao Zhan}
\affiliation{%
  \institution{Kuaishou Technology}
  \city{Beijing}
  \country{China}
}
\email{zhankaiqiao@kuaishou.com}

\author{Ben Wang}\authornote{Corresponding author.}
\affiliation{%
  \institution{Kuaishou Technology}
  \city{Beijing}
  \country{China}
}
\email{wangben@kuaishou.com}

\author{Kun Gai}
\affiliation{%
  \institution{Unaffiliated}
  \city{Beijing}
  \country{China}
}
\email{gai.kun@qq.com}
\renewcommand{\shortauthors}{Shuo su et al.}

\begin{abstract}
 Modern recommender systems are built upon computation-intensive infrastructure, and it is challenging to perform real-time computation for each request, especially in peak periods, due to the limited computational resources. Recommending by user-wise result caches is widely used when the system cannot afford a real-time recommendation. However, it is challenging to allocate real-time and cached recommendations to maximize the users' overall engagement. This paper shows two key challenges to cache allocation, i.e., the value-strategy dependency and the streaming allocation. Then, we propose a reinforcement prediction-allocation framework (RPAF) to address these issues. RPAF is a reinforcement-learning-based two-stage framework containing prediction and allocation stages. The prediction stage estimates the values of the cache choices considering the value-strategy dependency, and the allocation stage determines the cache choices for each individual request while satisfying the global budget constraint. We show that the challenge of training RPAF includes globality and the strictness of budget constraints, and a relaxed local allocator (RLA) is proposed to address this issue. Moreover, a PoolRank algorithm is used in the allocation stage to deal with the streaming allocation problem. Experiments show that RPAF significantly improves users' engagement under computational budget constraints.
\end{abstract}


\begin{CCSXML}
<ccs2012>
   <concept>
       <concept_id>10002951.10003317.10003347.10003350</concept_id>
       <concept_desc>Information systems~Recommender systems</concept_desc>
       <concept_significance>500</concept_significance>
       </concept>
 </ccs2012>
\end{CCSXML}

\ccsdesc[500]{Information systems~Recommender systems}



\keywords{Recommender Systems, Reinforcement Learning, Cache, Computation Allocation}


\maketitle

\section{Introduction}
Modern recommender systems are built upon computation-intensive infrastructure \cite{jiang2020dcaf,liu2017cascade,johnson2019billion,wang2020cold}. Millions or even billions of users visit the system, leading to huge traffic \cite{jiang2020dcaf,zhou2023rl}. Moreover, the recommender system's traffic is usually highly different between peak and off-peak periods, and the computational burden during peak periods is several times that of off-peak periods \cite{chen2024cache}, which leads to a significant challenge of computational burden in peak periods.

The result cache \cite{jiang2020dcaf,chen2024cache} is widely used in large-scale recommender systems to address this issue. As shown in Figure \ref{fig:cache}, when the system receives a user's request, it first provides a real-time recommendation, returning a group of items more than required, of which the rest are put into a result cache. When the user's next request comes, the system will perform a real-time recommendation again if the traffic does not exceed the system's affordability. Otherwise, it will recommend items directly from the result cache. In peak periods, the number of cached recommendations is comparable to that of real-time recommendations, and the cache largely mitigates the recommender systems' computational burden \cite{chen2024cache,zhou2023rl,jiang2020dcaf}.

\begin{figure}
    \centering
    \includegraphics[width=\columnwidth,trim=0 25 0 0, clip]{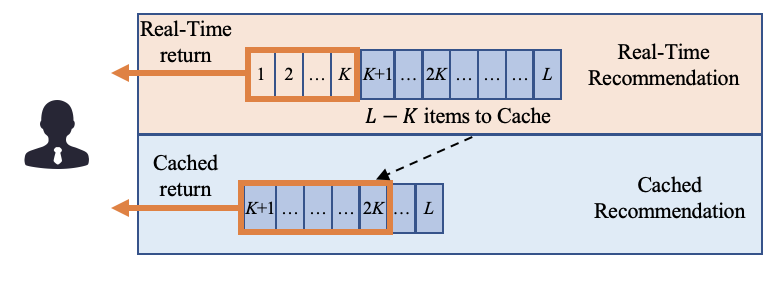}
    \caption{Recommendation with a result cache.}
    \label{fig:cache}
\end{figure}

\begin{figure}
    \centering
    \includegraphics[width=\columnwidth,trim=0 0 0 0, clip]
    {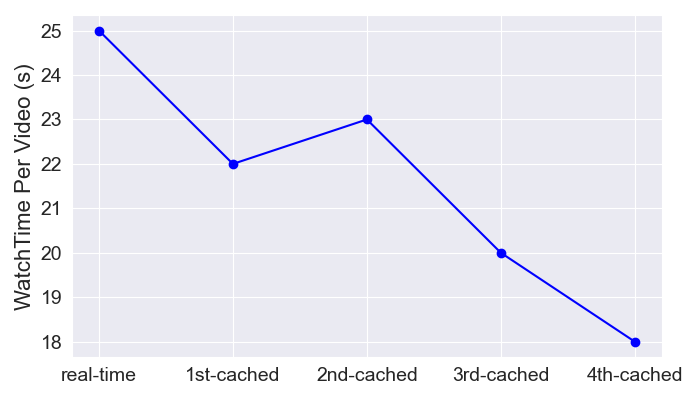}
    \vspace{-5mm}
    \caption{WatchTime decreases when continuously receiving cached recommendations.}
    \label{fig:continuous-watch-time}
    \vspace{-5mm}
\end{figure}

The cached results are typically suboptimal compared to real-time recommendations due to the lack of real-time computation, and allocating the real-time and cached recommendations for each user request is critical to maximizing the overall user experiences under computational budget constraints. Research on computational resource allocation problems \cite{jiang2020dcaf,yang2021computation} models the allocation problem as a constrained optimization problem. However, the cache allocation problem cannot be regarded as a special case of the computational resource allocation problem due to the following challenges:
\begin{itemize}
    \item \textbf{Value-Strategy Dependency}. Existing computational resource allocation methods assume that the requests of different time periods are independent and that the values of the computational resources are independent of the allocation strategy. However, such assumptions do not hold in the cache allocation problem. On the one hand, the size of the result cache is finite. Figure \ref{fig:continuous-watch-time} shows the users' average watch time when continuously receiving recommendations from the cache in a short video platform. If the system continuously provides cached recommendations to the same user, the result cache will soon be exhausted, and the user experience will quickly deteriorate. On the other hand, since the system's choice of whether to use the cache will affect not only the users' feedback on the current request but also the users' future actions, the values of the current cache choices also depend on the future cache allocation strategies.
    \item \textbf{Streaming Allocation}. Existing computational resource allocation approaches solve the allocation problem for a batch of requests in each time period. However, requests in online recommender systems arrive in a streaming manner, and the system needs to determine the cache choice when each individual request arrives while satisfying the global computational budget constraint.
\end{itemize}

To this end, we introduce a Reinforcement Prediction-Allocation Framework (RPAF). RPAF is a two-stage method with a prediction stage and an allocation stage. The prediction stage uses reinforcement learning to estimate the values under different cache choices considering value-strategy dependency, and the allocation stage uses the estimated values to perform the streaming allocation.

A key challenge in the RPAF is to deal with the budget constraint, which is global and strict, in the reinforcement learning model of the prediction stage. To address the globality issue, we introduce a relaxed local allocator (RLA). RLA is a policy function that can be optimized locally and continuously for each request. By using RLA, the constrained reinforcement learning problem in the prediction stage is transformed into a computationally tractable form. Based on RLA, the prediction stage provides an estimation of different cache choices considering the global budget constraint. To address the strictness issue, we provide a PoolRank algorithm to provide the allocation strategy in a streaming manner, satisfying the budget constraint strictly at each time step.

In summary, our contributions can be summarized as follows:
\begin{itemize}
\item We propose RPAF to perform the cache allocation with value-strategy dependency and streaming allocation.
\item We introduce RLA to estimate the value of the cache choices considering the impacts of the allocation strategy.
\item We provide a PoolRank algorithm to allocate the real-time and cached recommendations in a streaming manner.
\item We validate the effectiveness of RPAF through offline experiments and online A/B tests. RPAF has been deployed online in real-world applications, bringing a notable improvement.
\end{itemize}

\section{Related Work}
\subsection{Computation Resource Allocation in Recommender Systems}
Recommender systems have been widely researched in recent years, but most studies focus on improving business revenue or user experience under sufficient computational resources \cite{zhu2018learning,pi2020search,wang2020cold}. 
Some studies focus on the computational resource allocation problem in recommender systems, such as DCAF \cite{jiang2020dcaf}, CRAS \cite{yang2021computation}, RL-MPCA \cite{zhou2023rl}. DCAF and CRAS consider the computational resource allocation problem as a constrained optimization problem solved by
linear programming. RL-MPCA discusses the computational resource allocation problem in multi-phase recommender systems. However, existing methods do not consider the value-strategy dependency in the cache allocation problem studied in this paper.

\subsection{RL in Recommender Systems}
Reinforcement learning (RL) in recommender systems has gained significant attention due to its ability to optimize user long-term satisfaction \cite{cai2023two,afsar2022reinforcement,chen2024cache}. Value-based approaches estimate the user's satisfaction with recommended items from the available candidate set, and then select the one predicted to yield the highest satisfaction \cite{chen2018stabilizing,zhao2018recommendations}. Policy-based methods directly learn the recommendation policy and optimize it to increase user satisfaction \cite{chen2019large, ma2020off,chen2024cache}. 
RL-based models can consider the temporal dependency of the cache allocation problem, but it is challenging to consider the computational budget constraints in RL models. RL-MPCA \cite{zhou2023rl} uses deep Q-Network (DQN) with a dual variable to model the constraints, but we will show in this paper that DQN-like value-based methods face challenges when modeling the cache allocation problem with global and strict budget constraint. In contrast, we propose the RPAF with RLA to tackle the abovementioned challenges.

\subsection{CMDP}
A common approach to solving constrained Markov Decision Processes (CMDP) is the Lagrangian relaxation method\cite{tessler2018reward,chow2018risk,dalal2018safe,garcia2015comprehensive,liu2021policy}. The constrained optimization problem is reduced to unconstrained by augmenting the objective with a sum of the constraint functions weighted by their corresponding Lagrangian multipliers. Then, the Lagrangian multipliers are updated in the dual problem to satisfy the constraints. However, existing methods only consider the local constraints under each request and, therefore, cannot be directly applied in the cache allocation problem where the budget constraints are global, based on all requests at that moment. Moreover, existing methods meet the constraint in the sense of expectation, while the budget constraint in our scenario is strict. In contrast, the RPAF considers globality and the strictness of budget constraints.
 
\begin{figure}[t]
    \centering
    \includegraphics[width=1.0\linewidth,trim=0 90 0 0, clip]{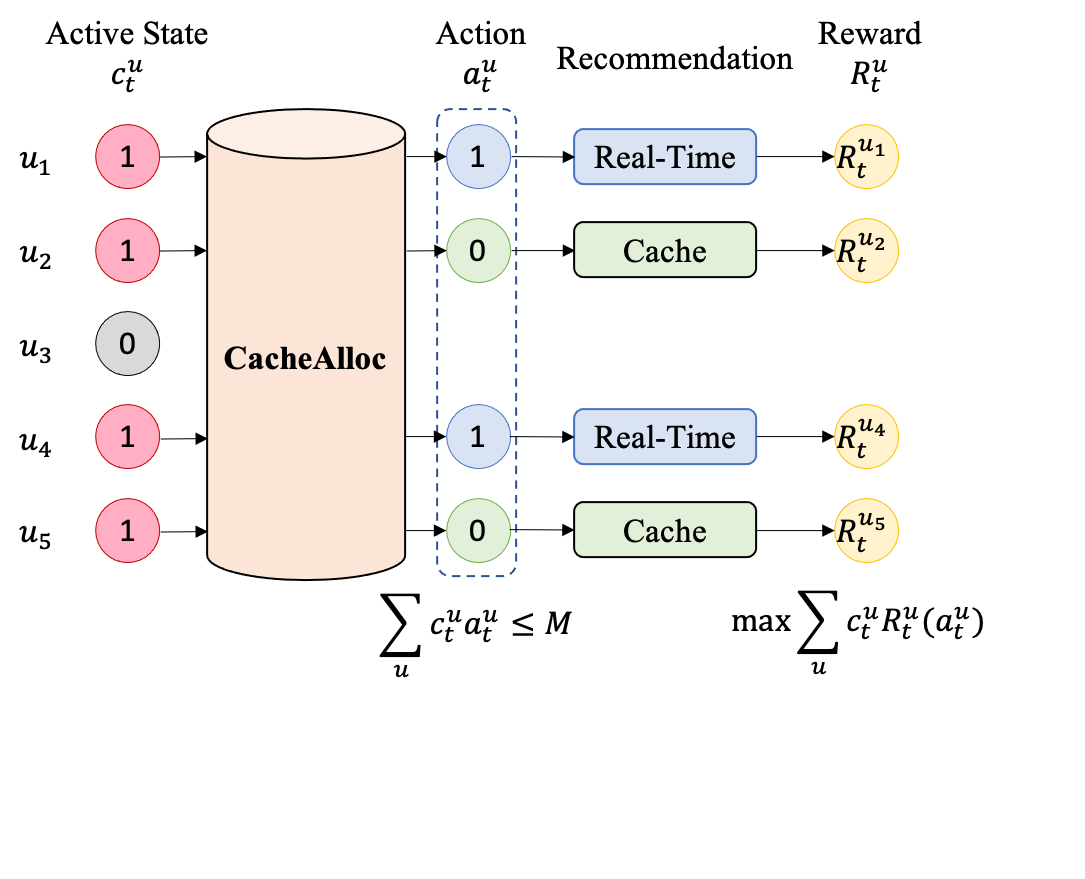}
    \caption{The Cache Allocation Problem.}
    \label{fig:cache_alloc}
    \vspace{-5mm}
\end{figure}

\begin{figure*}[t]
    \centering
    \includegraphics[width=0.8\linewidth,trim=0 40 0 0, clip]{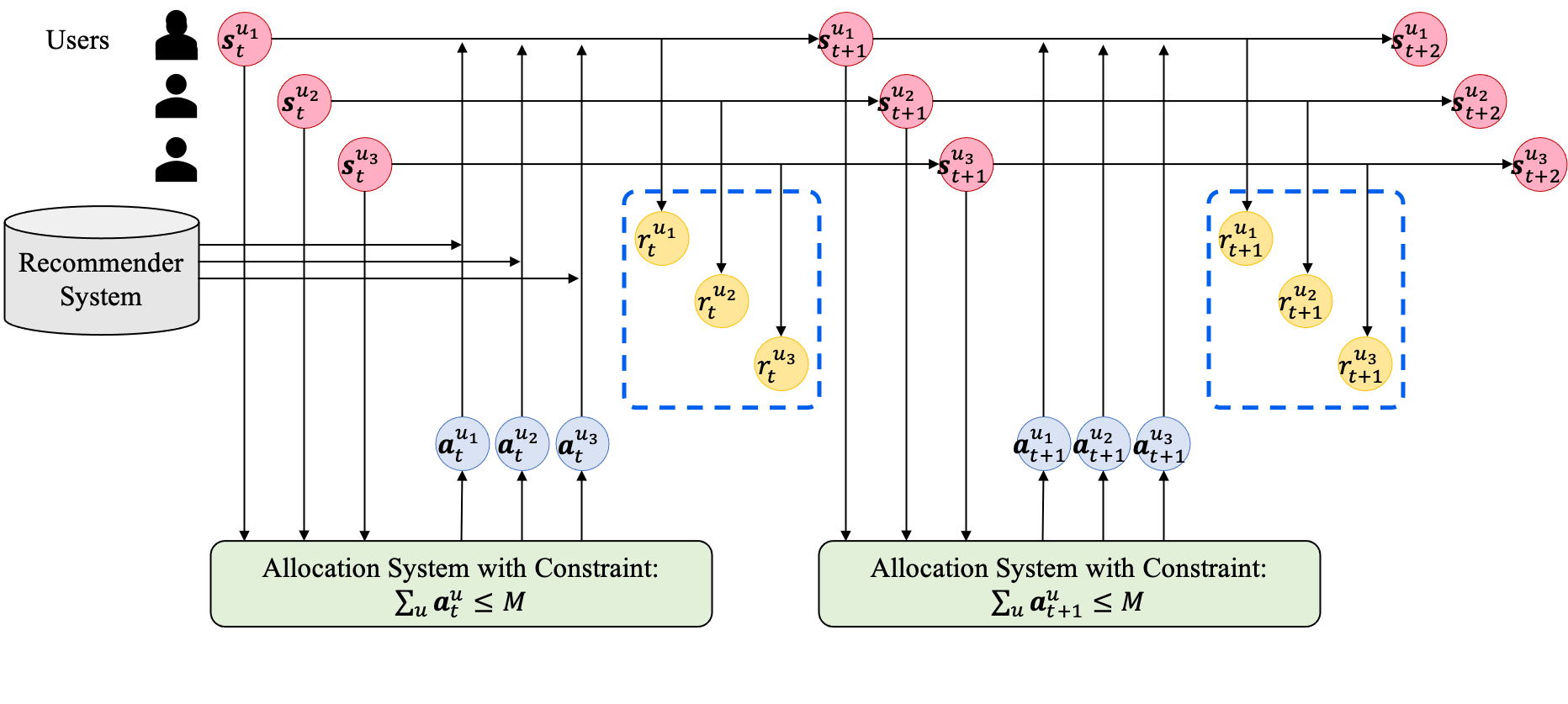}
    \caption{The Real Cache Allocation Problem.}
    \label{fig:cmdp}
    \vspace{-2mm}
\end{figure*}

\section{Cache Allocation Problem}
This section first provides a simplified cache allocation problem with pre-defined value estimation. Then, we discuss the challenges arising from the value-strategy dependency and provide a CMDP modeling of the real cache allocation problem, considering the dependency between the value estimation and the allocation strategy.

\subsection{Simplified Cache Allocation Problem}
We first consider a simplified cache allocation problem, of which the values of different cache choices are pre-defined. Such a cache allocation problem is a constrained optimization problem shown in Figure \ref{fig:cache_alloc}. We denote $\mathcal{U}$ as the set of users. At each time period $t$, some users in $\mathcal{U}$ open the app and send requests to the recommender system. We use $c_t^u = 1$ to denote that User $u$ sends a request at Time $t$, and $c_t^u=0$ to denote that User $u$ is inactive at Time $t$. When $c_t^u=1$, the system needs to recommend $K$ items to User $u$. There are two choices for the system to provide such a recommendation, represented by $a_{t}^u\in\{0,1\}$:
\begin{itemize}
    \item \textbf{Real-Time recommendation}. When $a_{t}^u=1$, the system will recommend by real-time computation. Specifically, there will be multiple stages, e.g., retrieval and ranking. In each stage, deep models will be used to predict the user's possible feedback (watch time, follow, like, etc.), and return $L$ items, of which the top $K$ items are recommended to the user and the rest are put into a result cache for future use.
    \item \textbf{Recommendation by the cache}. When $a_{t}^u=0$, the system will provide recommendations by the cache. In such cases, the recommender system obtains $K$ items directly from the user's result cache, and then updates the result cache.
\end{itemize}

Real-time recommendations usually perform better than cached recommendations since they can utilize users' most recent feedback by deep models \cite{liu2018stamp}. However, the computational burden of real-time recommendation is much larger, and the total requests to be processed by real-time recommendation must not exceed a given value $M$ in each time period. Under the above discussions, the cache allocation problem can be written as:

\noindent\textbf{CacheAlloc-Simplified}:
\begin{align}
    \max_{a_{t}^u} &\sum_{u\in\mathcal{U}} c_t^u\mathbb{E}\left[R_t^u |a_t^u\right]\label{eq:cache-alloc-obj} \\
\text{s.t.~}&\sum_{u\in\mathcal{U}}c_t^ua_{t}^u \leq M,\forall t \label{eq:cache-alloc-cost}\\
    &a_{t}^u \in\{0,1\},\forall u\in\mathcal{U},\forall t \label{eq:cache-alloc-x-t-j}
\end{align}
where $R_{t}^u$ means the value of User $u$ at Time $t$ when the system chooses the cache action $a_t^u$, and $M$ is the maximum requests that can be processed by real-time recommendation in each time period. The value $R_t^u$ can be any user feedback according to the concrete scenario, such as the usage time in news/short video recommendations and the revenue of advertising systems.

Suppose we have obtained the reward $R_t^u$ for each request, then the cache allocation problem in Eq. \eqref{eq:cache-alloc-obj}$\sim$\eqref{eq:cache-alloc-x-t-j} can be easily solved:
\begin{proposition} \label{prop:trivial}
    Given $\mathbb{E}\left[R_t^u|a_t^u\right]$ for each $u$ and each $a_t^u\in\{0,1\}$, the solution to the \textbf{CacheAlloc-Simplified} is:
    \begin{equation}
        a_t^u = \textbf{arg-top}_M\left\{\left.\mathbb{E}\left[R_t^u |a_t^u=1\right] - \mathbb{E}\left[R_t^u |a_t^u=0\right] \right|c_t^u=1\right\}
    \end{equation}
    where $\textbf{arg-top}_M$ means that $a_t^u=1$ if $u$ is in the top $M$ users with $c_t^u=1$ ranked by the given scores, and otherwise $a_t^u=0$.
\end{proposition}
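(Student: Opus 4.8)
The plan is to reduce \textbf{CacheAlloc-Simplified} to a cardinality-constrained linear selection and then close it with a routine exchange argument. First I would discard the inactive users: whenever $c_t^u=0$ the summand $c_t^u\mathbb{E}[R_t^u|a_t^u]$ vanishes and the variable $a_t^u$ does not enter the budget constraint \eqref{eq:cache-alloc-cost}, so it can be fixed arbitrarily and the problem decouples over the active set $\mathcal{A}_t=\{u:c_t^u=1\}$. Thus both the objective \eqref{eq:cache-alloc-obj} and the constraint only ever see active users.

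Next I would isolate the decision-dependent part of the objective. For an active user the contribution can be written as $\mathbb{E}[R_t^u|a_t^u=0]+a_t^u\,\Delta_t^u$, where $\Delta_t^u:=\mathbb{E}[R_t^u|a_t^u=1]-\mathbb{E}[R_t^u|a_t^u=0]$ is exactly the score appearing inside $\textbf{arg-top}_M$; one checks the two cases $a_t^u\in\{0,1\}$ directly. Summing over $\mathcal{A}_t$ then splits the objective into a fixed baseline $\sum_{u\in\mathcal{A}_t}\mathbb{E}[R_t^u|a_t^u=0]$ plus the term $\sum_{u\in\mathcal{A}_t}a_t^u\,\Delta_t^u$. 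Since the baseline does not depend on the choices, maximizing \eqref{eq:cache-alloc-obj} is equivalent to maximizing $\sum_{u\in\mathcal{A}_t}a_t^u\,\Delta_t^u$ subject to $\sum_{u\in\mathcal{A}_t}a_t^u\le M$ and $a_t^u\in\{0,1\}$.

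It then remains to show this cardinality-constrained linear objective is maximized by activating the $M$ users with the largest $\Delta_t^u$. I would argue by exchange: take any feasible $a_t$ that differs from the $\textbf{arg-top}_M$ rule, so there is an activated user $i$ and a de-activated user $j$ with $\Delta_t^j>\Delta_t^i$; swapping their values preserves the cardinality (hence feasibility) and raises the objective by $\Delta_t^j-\Delta_t^i>0$, so no such configuration is optimal and greedy top-$M$ selection wins. The one step I expect to be the real obstacle is not the exchange itself but the \emph{sign} of the gains: because the budget is an inequality, the globally optimal rule activates a user only when $\Delta_t^u>0$, so the statement as written is exact precisely in the regime where real-time recommendation dominates the cache (i.e.\ $\Delta_t^u\ge 0$), the operating assumption motivated earlier in the paper; without it one must read $\textbf{arg-top}_M$ as ranking only the strictly positive gains.
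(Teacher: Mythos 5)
Your proposal is correct and uses essentially the same argument as the paper: the paper's Appendix~A proof is exactly the pairwise exchange (stated as a contradiction) that you reduce to after linearizing the objective into a baseline plus $\sum_u a_t^u\,\Delta_t^u$. Your closing remark about the sign of $\Delta_t^u$ is a fair catch --- the paper silently relies on real-time recommendations dominating cached ones (cf.\ Corollary~\ref{corollary:equality}, which asserts the budget is tight) rather than ranking only the strictly positive gains --- but this does not change the substance or route of the proof.
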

\begin{proof}
See Appendix A.
\end{proof}

Proposition \ref{prop:trivial} means that the optimal allocations of the real-time recommendations are the top $M$ requests regarding the value difference between real-time and cached recommendations. Such results are also discussed in \cite{jiang2020dcaf}.

\begin{corollary} \label{corollary:equality}
The optimal solution to the \textbf{CacheAlloc-Simplified} satisfies $\sum_ua_t^u=M$, which means Eq. \eqref{eq:cache-alloc-cost} reaches the equality.
\end{corollary}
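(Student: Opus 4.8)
The plan is to exploit the monotonicity of the objective in each decision variable $a_t^u$, which rests on the paper's modeling assumption that a real-time recommendation is never worse than a cached one, i.e.\ $\mathbb{E}\!\left[R_t^u\mid a_t^u=1\right]\ge \mathbb{E}\!\left[R_t^u\mid a_t^u=0\right]$ for every active user. I would also invoke the natural non-degeneracy assumption that the number of active requests is at least the budget, $\sum_{u\in\mathcal{U}}c_t^u\ge M$, since otherwise the budget cannot possibly be saturated and the claim is vacuous.

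First, I would note the one-line derivation directly from Proposition~\ref{prop:trivial}. The operator $\textbf{arg-top}_M$ assigns $a_t^u=1$ to exactly the $M$ active users with the largest value differences; when at least $M$ users are active, this selects precisely $M$ of them, so $\sum_{u}c_t^u a_t^u=M$ by construction and constraint \eqref{eq:cache-alloc-cost} holds with equality.

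Second, to make the statement self-contained and to expose \emph{why} the budget binds, I would give an exchange argument. Suppose an optimal allocation $a^\star$ satisfies $\sum_u c_t^u a_t^{\star u}=m<M$. Because at least $M$ users are active, some active user $u_0$ has $a_t^{\star u_0}=0$. Flipping this coordinate to $1$ keeps the allocation feasible, since $m+1\le M$, and changes the objective \eqref{eq:cache-alloc-obj} by exactly $\mathbb{E}\!\left[R_t^{u_0}\mid a_t^{u_0}=1\right]-\mathbb{E}\!\left[R_t^{u_0}\mid a_t^{u_0}=0\right]\ge 0$. Hence one can reach a feasible allocation using the full budget without decreasing the objective, so a budget-saturating optimum always exists.

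The main subtlety to handle carefully is the tie case, where some value difference equals zero: then flipping an unused slot leaves the objective unchanged, so there may coexist optimal allocations that do not saturate the budget. I would resolve this either by strengthening the assumption to the strict inequality $\mathbb{E}\!\left[R_t^u\mid a_t^u=1\right]>\mathbb{E}\!\left[R_t^u\mid a_t^u=0\right]$ (so flipping strictly improves and the tight allocation becomes the unique optimum), or by reading the corollary as the \emph{existence} of a budget-saturating optimum, which is exactly what the exchange argument delivers and what matters for the downstream streaming allocator. The other edge case, $\sum_u c_t^u<M$, is ruled out by the non-degeneracy assumption and is operationally irrelevant, since the budget is never the binding resource during off-peak traffic.
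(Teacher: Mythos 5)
Your proposal is correct, and its first paragraph is essentially the paper's own (unstated) argument: the paper offers no explicit proof of this corollary and simply treats it as immediate from Proposition~\ref{prop:trivial}, since $\textbf{arg-top}_M$ assigns $a_t^u=1$ to exactly $M$ active users and hence saturates Eq.~\eqref{eq:cache-alloc-cost} by construction. Where you go beyond the paper is in the exchange argument and the edge-case analysis, and both additions are sound and worthwhile: the exchange argument makes explicit that the result actually rests on the monotonicity assumption $\mathbb{E}\left[R_t^u\mid a_t^u=1\right]\ge\mathbb{E}\left[R_t^u\mid a_t^u=0\right]$, which the paper only asserts informally ("real-time recommendations usually perform better"), and it delivers the corollary in the form that is actually needed downstream, namely the \emph{existence} of a budget-saturating optimum rather than the claim that every optimum saturates. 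Your two caveats --- that ties permit non-saturating optima, and that the claim is false when $\sum_u c_t^u<M$ (which genuinely occurs in this application during off-peak hours, as the paper's own Figure on daily performance acknowledges when it notes all methods coincide from 0\,AM to 6\,AM) --- are both real gaps in the paper's statement that your reading of the corollary as an existence claim correctly repairs. In short: same core derivation, plus a more honest accounting of the hypotheses under which it holds.
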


According to Corollary \ref{corollary:equality}, we will replace Eq. \eqref{eq:cache-alloc-cost} by the equality $\sum_ua_t^u=M$ in the rest of this paper.

\subsection{Real Cache Allocation Problem}
Although the solution to \textbf{CacheAlloc-Simplified} is simple according to Proposition \ref{prop:trivial}, it assumes that the value $R_t^u$ under each $a_t^u$ is pre-defined before we compute the cache allocation problem. Such an assumption is used in the existing computational resource allocation methods \cite{jiang2020dcaf,lu2023greenflow,zhou2023rl}, but it does not hold in the cache allocation problem. Moreover, \textbf{CacheAlloc-Simplified} assumes the cache choices can be determined after the information of all requests is obtained at each time step, which contradicts the streaming manner of online requests. Now, we discuss the two challenges and provide the modeling of the real cache-allocation problem.

\subsubsection{Value-Strategy Dependency}
The value-strategy dependency arises from the users' consecutive interactions with the recommender system against the finiteness of the cache. When a user opens the app, he/she interacts with the recommender system through multiple requests, and the system responds to each request. Based on the user's experience with the recommended items, the user decides whether to continue using the app or leave. Because the cache is finite, continuously providing cached recommendations to the same user's multiple requests will soon exhaust the result cache and deteriorate the user's experience, as shown in Figure \ref{fig:continuous-watch-time}. Therefore, the cache choice $a_t^u$ will affect the future value $R_{t+1}^u$, and the future cache choice $a_{t+1}^u$, which means the value $R_t^u$ should contain not only the user's current feedback but also the user's future feedback.

Now, we discuss the real cache allocation problem considering the value-strategy dependency. We use RL to model the real cache allocation problem, as shown in Figure \ref{fig:cmdp}. Specifically, we model the interaction between users and the recommender systems as a Constrained Markov
Decision Process (CMDP) \cite{sutton2018reinforcement} $\left(\mathcal{S},\mathcal{A},\mathcal{P},R,\mathcal{C},\rho_0,\gamma\right)$, where $\mathcal{S}$ is the state space, $\mathcal{A}$ is the action space (in this paper $\mathcal{A} = \{0,1\}$), $P:\mathcal{S}\times\mathcal{A}\to\mathcal{S}$ is the transition function, $R:\mathcal{S}\times\mathcal{A}\to\mathbb{R}$ is the reward function, $\mathcal{C}$ is the constraints, $\rho_0$ is the initial state, and $\gamma$ is the discounting factor.
When User $u$ opens the app, a session begins, which consists of multiple requests until the user leaves the app. At Step $t$, the recommender system obtains a user state $\boldsymbol{s}_t^u\in\mathcal{S}$.
According to the user state $\boldsymbol{s}_t^u$, the system generates an action $a_t^u\in\mathcal{A}$, and performs the real-time or cached recommendations according to the action $a_t^u$. After watching the recommended items, the user provides feedback $r_t^u = R\left(\boldsymbol{s}_t^u,a_t^u\right)$. Then, the user transfers to the next state $\boldsymbol{s}_{t+1}^u=P\left(\boldsymbol{s}_t^u,a_t^u\right)$ and determines whether to send the next request or leave. The constraint set $\mathcal{C}$ contains the computational budget constraints as shown in Eq. \eqref{eq:cache-alloc-cost}.

The CMDP model considers the impacts of the actions on future user states and rewards. Under the abovementioned discussions,  the rewards to maximize can be defined as:
\begin{equation}
\begin{aligned}
R_{t}^u&=\mathbb{E}\left[\left.\sum_{t'=t}^\infty \gamma^{t'-t}r_{t'}^u\right|\boldsymbol{s}_t^u, a_t^u\right]
\end{aligned}
\end{equation}

\subsubsection{Streaming Allocation} In the online scenario, the user requests arrive streamingly, and the system needs to make decisions as soon as possible. Therefore, it is impossible to obtain all the requests at the time period $t$ to calculate the optimization problem. In other words, the cache choice $a_t^u$ should be determined locally by the user state $\boldsymbol{s}_t^u$:
\begin{equation}
    a_t^u = \mu(\boldsymbol{s}_t^u;\theta)
\end{equation}
where $\theta$ is the parameter.

\subsubsection{Real Cache Allocation Problem}
Considering the value-strategy dependency and the streaming allocation problem, the real cache allocation problem can be written as:

\noindent\textbf{CacheAlloc-Real}:
\begin{equation} \label{eq:real-cache-alloc}
\begin{aligned}
    \max_{a_{t}^u} &\sum_{u\in\mathcal{U}} c_t^u\mathbb{E}\left[\left.\sum_{t'=t}^\infty \gamma^{t'-t}r_{t'}^u\right|\boldsymbol{s}_t^u, a_t^u\right] \\
\text{s.t.~}&\sum_{u\in\mathcal{U}}c_t^ua_{t}^u = M,\forall t \\
&a_t^u = \mu(\boldsymbol{s}_t^u;\theta) \\
    &a_{t}^u \in\{0,1\},\forall u\in\mathcal{U},\forall t 
\end{aligned}
\end{equation}
where we have replaced the budget constraint with an equality constraint according to Corollary \ref{corollary:equality}.

The \textbf{CacheAlloc-Real} considers the value-strategy dependency by the CMDP modeling. However, solving this problem is still challenging because of the budget constraint and the demand for streaming allocation. The next section proposes the RPAF approach to solve the problem.

\section{Methodology} \label{sec:method}
\begin{figure}[t]
    \centering
    \includegraphics[width=1.0\linewidth,trim=0 40 0 0, clip]{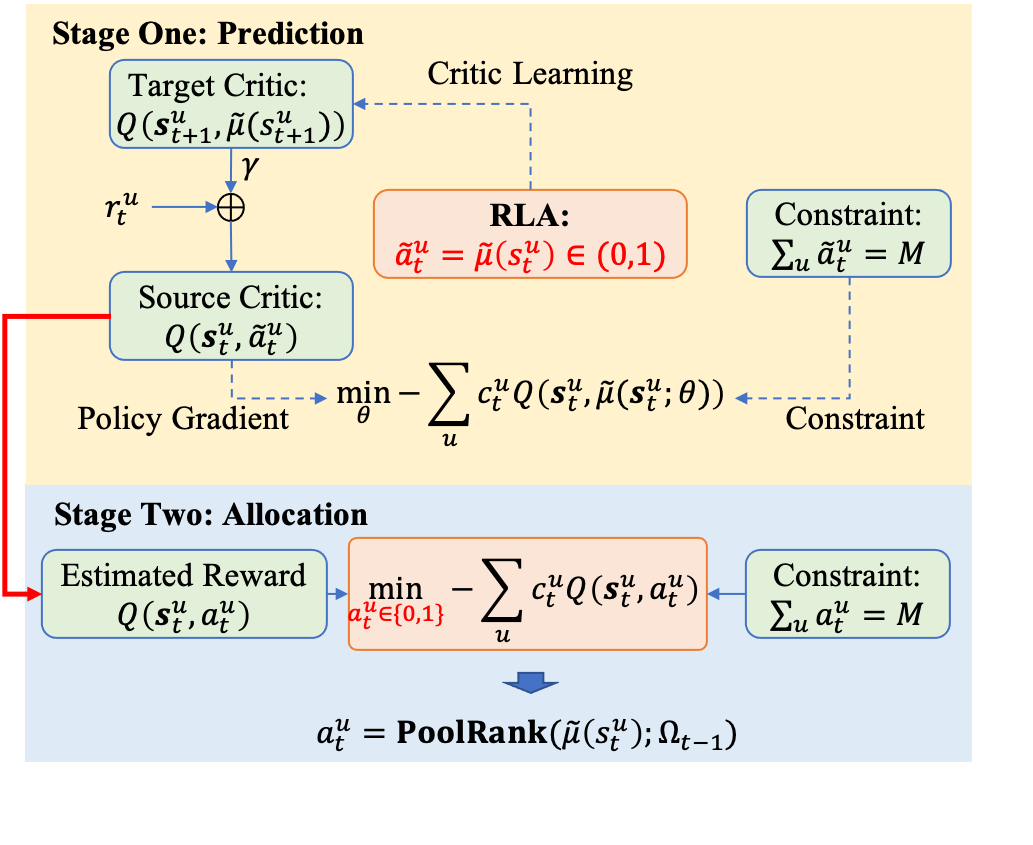}
    \caption{The RPAF method.}
    \label{fig:rrsa}
    \vspace{-5mm}
\end{figure}

This section proposes RPAF to solve the cache allocation problem while considering the impacts of the value-strategy dependency and streaming allocation. We first provide the overall framework in Sec. \ref{sec:method:framework}, which contains an RL-based prediction stage for value estimation under value-strategy dependency and an allocation stage to apply the streaming cache allocation. RPAF tackles the value-strategy dependency by the constrained RL modeling, and Sec. \ref{sec:method:rla} proposes a relaxed local allocator (RLA) to train the constrained RL. Then, Sec. \ref{sec:method:allocation} discusses the allocation stage and proposes a PoolRank algorithm to tackle the streaming allocation problem. 

\subsection{Overall Framework} \label{sec:method:framework}
RPAF, as shown in Figure \ref{fig:rrsa}, contains two stages, i.e., a prediction stage to estimate the value under different cache choices considering the value-strategy dependency and an allocation stage to solve the stream cache allocation problem under the estimated value.

The prediction stage uses the actor-critic structure. Specifically, we let the cache choice $a_t^u$ be determined by an actor function $\mu\left(\boldsymbol{s}_t;\theta\right)$ parameterized by $\theta$, and define the critic function $Q\left(\boldsymbol{s}_t^u,a_t^u;\phi\right)$, parameterized by $\phi$, to estimate the long-term reward $R_t^u$ in Eq. \eqref{eq:real-cache-alloc} under the action $a_t^u$. Then, the objective of critic learning is:
\begin{align}
    \min_{\phi} \sum_{u\in\mathcal{U}} c_t^u\left[Q\left(\boldsymbol{s}_t^u,a_t^u\right)-r_t^u-\gamma Q\left(\boldsymbol{s}_{t+1}^u,\mu(\boldsymbol{s}_{t+1}^u;\theta^-);\phi^-\right)\right]^2\label{eq:cac-critic}
\end{align}
And the objective of actor learning is:
\begin{align}
    \min_{\theta} &-\sum_{u\in\mathcal{U}} c_t^u Q\left(\boldsymbol{s}_t^u,\mu\left(\boldsymbol{s}_t^u;\theta\right)\right) \label{eq:cac-actor}\\
    \text{s.t. } &\sum_{u}c_t^u\mu\left(\boldsymbol{s}_{t}^u;\theta\right) = M \label{eq:cac-constraint}
\end{align}

After the prediction stage, we obtain the estimated critic function $Q\left(\boldsymbol{s}_t^u,a_t^u;\phi\right)$. Then, in the allocation stage, we solve the following constrained optimization problem:
\begin{equation} \label{eq:strict-allocator}
\begin{aligned}
    \max_{a_{t}^u} &\sum_{u\in\mathcal{U}} c_t^uQ\left(\boldsymbol{s}_t^u,a_t^u;\phi\right) \\
\text{s.t.~}&\sum_{u\in\mathcal{U}}c_t^ua_{t}^u = M,\forall t \\
    &a_{t}^u \in\{0,1\},\forall u\in\mathcal{U},\forall t 
\end{aligned}
\end{equation}

\bigskip

\noindent\textbf{Remark: Why not DQN?}
It seems DQN is more proper in the discrete decision problem, and there exists research on applying DQN to the computational resource allocation problem\cite{zhou2023rl}. The loss function of DQN is
\begin{equation}
    \mathcal{L}_{DQN} = c_t^u\left[Q\left(\boldsymbol{s}_t^u,a_t^u\right)-r_t^u-\gamma \max_{a\in\{0,1\}}Q\left(\boldsymbol{s}_{t+1}^u,a)\right)\right]^2\label{eq:dqn-critic}
\end{equation}
However, it is challenging to consider the \textbf{global} budget constraint, i.e. Eq. \eqref{eq:cache-alloc-cost} in the loss function. Specifically, because real-time recommendations are more likely to perform better than cached recommendations, we usually have $Q\left(\boldsymbol{s}_{t+1}^u,1\right) > Q\left(\boldsymbol{s}_{t+1}^u,0\right)$, and Eq. \eqref{eq:dqn-critic} will degrade to the following form:
\begin{equation}
    \mathcal{L}_{DQN} = c_t^u\left[Q\left(\boldsymbol{s}_t^u,a_t^u\right)-r_t^u-\gamma Q\left(\boldsymbol{s}_{t+1}^u,1)\right)\right]^2\label{eq:dqn-critic-degrade}
\end{equation}
which means we may expect $Q\left(\boldsymbol{s}_t^u,a_t^u\right)$ to be close to $r_t^u+\gamma Q\left(\boldsymbol{s}_{t+1}^u,1\right)$, of which the action in the time period $t+1$ is always 1. This result is clearly inconsistent with the budget constraint. Recently, there has been some research on constrained DQN and weakly coupled DQN\cite{kalweit2020deep,el2024weakly,zhou2023rl}, which uses the Lagrangian technique to modify the $Q$-function. However, we find that the Lagrangian multiplier is difficult to obtain in the online streaming settings (also see the experiments in Sec. \ref{sec:experiment}). Therefore, we do not use DQN in this paper.

\bigskip

The key technique in RPAF is to separate the value estimation and the cache allocation while considering their dependency. The next problem is to solve RPAF. If the budget constraint in Eq. \eqref{eq:cac-constraint} did not exist, the problem would degenerate into a typical actor-critic structure, and there is an amount of research on more effective algorithms, e.g. DDPG \cite{lillicrap2015continuous}, TD3 \cite{fujimoto2018addressing}, SAC \cite{haarnoja2018soft}, etc. However, the budget constraint brings significant challenges to solving the problem:
\begin{itemize}
    \item \textbf{Globality}. The budget constraint is over all the users in Time $t$, and the system needs to determine all actions according to all the user states, which is computationally unaffordable.
    \item \textbf{Strictness}. The budget constraint should be satisfied strictly. However, typical RL algorithms are usually solved in the sense of expectation, which is unsatisfactory.
\end{itemize}

We use the relaxed local allocator (RLA) and the PoolRank technique to address these two issues, respectively.

\subsection{Prediction Stage with RLA} \label{sec:method:rla}
The motivation of RLA is to transform the global and strict constraint in Eq. \eqref{eq:cac-constraint} into a local and relaxed constraint so that it can be solved by sample-wise gradient descent. 
We first relax the binary cache action $a_t^u$ to a continuous one $\tilde{a}_t^u\in[0,1]$. $\tilde{a}_t^u$ can be regarded as the probability of $a_t^u=1$.
We define RLA as $\tilde{a}_t^u = \tilde{\mu}\left(\boldsymbol{s}_t;\theta\right)$, i.e., the allocator to determine the relaxed cache action $\tilde{a}_t^u$. We reuse the critic $Q\left(\boldsymbol{s}_t^u,\tilde{a}_t^u;\phi\right)$ as the estimated long-term reward under the relaxed action $\tilde{a}_t^u$. By the probability explanation of $\tilde{a}_t^u$, we have
\begin{equation} \label{eq:critic-function-tilde}
    Q\left(\boldsymbol{s}_t^u,\tilde{a}_t^u;\phi\right) = \tilde{a}_t^uQ\left(\boldsymbol{s}_t^u,1;\phi\right) + \left(1-\tilde{a}_t^u\right)Q\left(\boldsymbol{s}_t^u,0;\phi\right)
\end{equation}
By replacing the true action with RLA, we can rewrite the actor learning in Eq. \eqref{eq:cac-actor}\eqref{eq:cac-constraint} as:
\begin{equation} \label{eq:cac-relax-constraint-m}
\begin{aligned}
    \min_{\theta} &-\sum_{u\in\mathcal{U}} c_t^u Q\left(\boldsymbol{s}_t^u,\tilde{\mu}\left(\boldsymbol{s}_t^u;\theta\right)\right) \\
    \text{s.t. } &\frac{\sum_{u}c_t^u\tilde{\mu}\left(\boldsymbol{s}_{t}^u;\theta\right)}{\sum_{u}c_t^u} = m_t 
\end{aligned}
\end{equation}
Here, we rewrite the constraint by dividing by $\sum_uc_t^u$ on both sides of the equality, and so $m_t = M/\sum_uc_t^u$. $m_t \in (0,1)$ is the ratio of real-time recommendations at Time $t$.

With the proposed RLA, i.e. $\tilde{\mu}\left(\boldsymbol{s}_t^u;\theta\right)$, we are ready to derive the training process. We consider a penalty function $T\left(\hat{x},x\right)$ satisfying:
\begin{itemize}
    \item $T\left(\hat{x},x\right)$ reaches the minimum at $\hat{x}=x$.
    \item $T$ is convex with regard to $\hat{x}$.
\end{itemize}
Typically, $T(\cdot)$ can be chosen as:
\begin{itemize}
    \item mean square error (MSE) function: $T(\hat{x},x)=\left(
    \hat{x}-x\right)^2$.
    \item Kullback–Leibler (KL) divergence:
    
    $T(\hat{x},x)=-\left[x\log\left(\hat{x}\right)+(1-x)\log\left(1-\hat{x}\right)\right]$
\end{itemize}

By using the penalty function to replace the constraint, we combine the constraint into the objective of the actor learning:
\begin{align}
    \min_{\theta} L_a(\theta)=&\sum_{u\in\mathcal{U}} c_t^u \left[-Q\left(\boldsymbol{s}_t^u,\tilde{\mu}\left(\boldsymbol{s}_t^u;\theta\right)\right) + \alpha T\left(\frac{\sum_{u}c_t^u\tilde{\mu}\left(\boldsymbol{s}_{t}^u;\theta\right)}{\sum_{u}c_t^u}, m_t \right)\right]\label{eq:cac-relax-actor}
\end{align}

By minimizing $L_a(\theta)$, we obtain the policy $\tilde{\mu}$, which maximizes the critic function $Q$ under the budget constraints. However, the penalty function $T$ in Eq. \eqref{eq:cac-relax-actor} still contains a global sum over all users, which is computationally unaffordable. To tackle this challenge, we consider the upper bound of $L_a(\theta)$. According to the convexity of $T(\hat{x},x)$ regarding $\hat{x}$, we have
\begin{equation}
T\left(\frac{\sum_{u}c_t^u\tilde{\mu}\left(\boldsymbol{s}_{t}^u;\theta\right)}{\sum_{u}c_t^u}, m_t \right) \leq \frac{\sum_{u}c_t^uT(\tilde{\mu}\left(\boldsymbol{s}_t^u;\theta\right),m_t)}{\sum_uc_t^u}
\end{equation}
Then we obtain an upper bound $\bar{L}_a(\theta)$ satisfying $L_a(\theta)\leq \bar{L}_a(\theta)$:
\begin{equation}
    \bar{L}_a(\theta)=\sum_{u\in\mathcal{U}} c_t^u\left[ -Q\left(\boldsymbol{s}_t^u,\tilde{\mu}\left(\boldsymbol{s}_t^u;\theta\right)\right) + \alpha T\left(\tilde{\mu}\left(\boldsymbol{s}_{t}^u;\theta\right), m_t \right)\right]\label{eq:cac-relax-actor-upper}
\end{equation}

By minimizing the upper bound $\bar{L}_a\left(\theta\right)$, the original loss $L_a\left(\theta\right)$ will also be bounded from above. Note that terms of different $u$ in Eq. \eqref{eq:cac-relax-actor-upper} are independent. In the training process, for each sample containing a sample of a user $u$ at the time period $t$ satisfying $c_t^u=1$, the actor loss can be written in the local form:
\begin{equation}
    \bar{L}_a(\theta;u,t)= -Q\left(\boldsymbol{s}_t^u,\tilde{\mu}\left(\boldsymbol{s}_t^u;\theta\right)\right) + \alpha T\left(\tilde{\mu}\left(\boldsymbol{s}_{t}^u;\theta\right), m_t \right)\label{eq:cac-actor-sgd}
\end{equation}
and the critic loss of each sample is:
\begin{equation}
    L_c\left(\phi;u,t\right)=\left[Q\left(\boldsymbol{s}_t^u,a_t^u\right)-r_t^u-\gamma Q\left(\boldsymbol{s}_{t+1}^u,\tilde{\mu}(\boldsymbol{s}_{t+1}^u;\theta^-);\phi^-\right)\right]^2 \label{eq:cac-critic-sgd}
\end{equation}

We provide the training algorithm with RLA in Algorithm \ref{alg:prediction}.

\begin{algorithm}
\caption{Training with RLA in the prediction stage}
\label{alg:prediction}
\begin{algorithmic}[1]
\STATE Input: $\left\{c_{1:T}^u\boldsymbol{s}_{1:T}^u,a_{1:T}^u, r_{1:T}^u\right\}$ for each user $u \in \mathcal{U}$.
\STATE Output: A critic function $Q\left(\boldsymbol{s}_t^u,\tilde{a}_t^u;\phi\right)$ parameterized by $\phi$; the RLA: $\tilde{\mu}\left(\boldsymbol{s}_t;\theta\right)$ parameterized by $\theta$.
\FOR{each time period $t, 1\leq t\leq T$ from the replay buffer}
    \FOR{each user $u \in \mathcal{U}$ where $c_t^u = 1$}
        \STATE Collect the user state $\boldsymbol{s}_t^u$, the reward $r_t^u$, and the action $a_t^u$ from the replay buffer.
        \STATE Critic Learning: $\phi\gets \phi - \alpha_c\partial L_c/\partial \phi$, where $L_c$ is defined in Eq. \eqref{eq:cac-critic-sgd}.
        \STATE Actor Learning: $\theta\gets \theta - \alpha_a\partial \bar{L}_a/\partial \theta$, where $\bar{L}_a$ is defined in Eq. \eqref{eq:cac-actor-sgd}.
    \ENDFOR
\ENDFOR
\end{algorithmic}
\end{algorithm}

\subsection{Streaming Allocation with PoolRank} \label{sec:method:allocation}
After the prediction stage, we have obtained the critic $Q(\boldsymbol{s}_t^u,\tilde{a}_t^u;\phi)$. Because the strict action $a_t^u$ is a special case of the relaxed action $\tilde{a}_t^u$, $Q(\boldsymbol{s}_t^u,\tilde{a}_t^u;\phi)$ can also be used in the allocation stage in Eq. \eqref{eq:strict-allocator}. Now we discuss the solution to the streaming allocation problem. Existing approaches \cite{jiang2020dcaf,zhou2023rl} usually use the Lagrangian technique to make the computational resource allocation algorithm satisfy the streaming fashion, but the convergence of the Lagrangian multiplier is also a significant challenge. Luckily, the structure of the cache allocation problem makes it possible to find more effective approaches. Here, we propose the PoolRank algorithm to perform the streaming allocation in the allocation stage of RPAF.

We first show the relationship between the final allocation strategy $a_t^u$ and the RLA output $\tilde{\mu}\left(\boldsymbol{s}_t^u;\theta\right)$:
\begin{proposition} \label{prop:consistency}
Assume the critic $Q(\boldsymbol{s}_t^u,\tilde{a}_t^u;\phi)$ and the RLA $\tilde{\mu}\left(\boldsymbol{s}_t^u;\theta\right)$ are obtained from the prediction stage, i.e. they satisfies Eq. \eqref{eq:cac-relax-actor-upper}. Then, the solution to the allocation stage in Eq. \eqref{eq:strict-allocator} is
\begin{equation} \label{eq:strict-allocator-actor}
a_t^u = \textbf{arg-top}_M\left\{\left.\tilde{\mu}\left(\boldsymbol{s}_t^u;\theta\right)\right|u\in\mathcal{U},c_t^u=1\right\}
\end{equation}
\end{proposition}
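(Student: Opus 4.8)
The plan is to reduce the combinatorial allocation problem in Eq. \eqref{eq:strict-allocator} to a ranking problem via Proposition \ref{prop:trivial}, and then to show that the RLA output $\tilde{\mu}(\boldsymbol{s}_t^u;\theta)$ induces exactly the same ranking as the value difference that Proposition \ref{prop:trivial} ranks by. First I would observe that Eq. \eqref{eq:strict-allocator} is an instance of \textbf{CacheAlloc-Simplified} with the reward $\mathbb{E}[R_t^u|a_t^u]$ replaced by $Q(\boldsymbol{s}_t^u,a_t^u;\phi)$ (the equality constraint being legitimate by Corollary \ref{corollary:equality}), so Proposition \ref{prop:trivial} applies directly and tells me that the optimal strict allocation is $a_t^u=\textbf{arg-top}_M\{\Delta_t^u\mid c_t^u=1\}$, where I abbreviate $\Delta_t^u=Q(\boldsymbol{s}_t^u,1;\phi)-Q(\boldsymbol{s}_t^u,0;\phi)$. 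The remaining task is therefore to prove that ordering the active requests by $\tilde{\mu}(\boldsymbol{s}_t^u;\theta)$ is equivalent to ordering them by $\Delta_t^u$.

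To connect $\tilde{\mu}$ to $\Delta_t^u$, I would exploit the fact that the RLA realizes the minimization in Eq. \eqref{eq:cac-relax-actor-upper}. As the paper notes, the terms for different $u$ in $\bar{L}_a(\theta)$ are independent, so minimizing the global loss amounts to minimizing each per-sample loss $\bar{L}_a(\theta;u,t)$ of Eq. \eqref{eq:cac-actor-sgd} separately. Substituting the linear critic form Eq. \eqref{eq:critic-function-tilde} gives $\bar{L}_a(\theta;u,t)=-\tilde{\mu}\,\Delta_t^u+\alpha T(\tilde{\mu},m_t)$ up to an additive term independent of $\tilde{\mu}$. Treating $\tilde{\mu}\in[0,1]$ as the free variable, the first-order stationarity condition reads $\alpha\,\partial T(\tilde{\mu},m_t)/\partial\tilde{\mu}=\Delta_t^u$.

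The key step is then to invert this relation monotonically. Because $T(\hat{x},x)$ is convex in $\hat{x}$, its partial derivative $\partial T/\partial\hat{x}$ is nondecreasing in $\hat{x}$; for the MSE and KL choices it is in fact strictly increasing, so the stationarity condition defines $\tilde{\mu}$ as a strictly increasing function of $\Delta_t^u$ (with the usual clamping to the endpoints $0$ and $1$, which preserves monotonicity). Consequently $\tilde{\mu}(\boldsymbol{s}_t^u;\theta)$ and $\Delta_t^u$ induce identical orderings on the active requests, so the top-$M$ sets they select coincide; combining this with Proposition \ref{prop:trivial} yields Eq. \eqref{eq:strict-allocator-actor}.

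I expect the main obstacle to be the rigorous justification of the per-sample first-order condition. It presumes the actor network is flexible enough that minimizing $\bar{L}_a$ drives each $\tilde{\mu}(\boldsymbol{s}_t^u;\theta)$ to the unconstrained minimizer of its own term, and one must handle boundary cases where the stationary point falls outside $[0,1]$ and the optimum is clamped to an endpoint. These saturated cases still respect the monotone correspondence, but checking that endpoint ties do not disturb the $\textbf{arg-top}_M$ selection is the delicate part of the argument.
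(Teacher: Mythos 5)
Your proposal follows essentially the same route as the paper's own proof: reduce Eq.~\eqref{eq:strict-allocator} to Proposition~\ref{prop:trivial} so that the allocation ranks by $Q(\boldsymbol{s}_t^u,1;\phi)-Q(\boldsymbol{s}_t^u,0;\phi)$, then use the per-sample stationarity condition $\alpha\,\partial T/\partial\tilde{\mu}=Q(\boldsymbol{s}_t^u,1;\phi)-Q(\boldsymbol{s}_t^u,0;\phi)$ together with the convexity of $T$ to conclude that $\tilde{\mu}$ is a monotone function of that difference, hence induces the same top-$M$ set. Your added attention to endpoint clamping and ties is a point the paper's proof silently skips, but the core argument is identical.
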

\begin{proof}
    See Appendix B.
\end{proof}

Proposition \ref{prop:consistency} shows the relationship between the relaxed allocator $\tilde{\mu}$ and the final allocation result $a_t^u$. Actually, if a request is more desired for the real-time recommendation, RLA will also return a larger value. To illustrate the PoolRank algorithm, we rewrite Eq. \eqref{eq:strict-allocator-actor} as
\begin{equation} \label{eq:online-ranking-ideal}
\begin{aligned}
a_t^u &= \textbf{1}\left\{ \textbf{Rank}\left(\tilde{\mu}\left(\boldsymbol{s}_{t}^{u}\right)| \Omega_{t}\right) \leq M\right\}\\
\text{s.t. }\Omega_{t} &= \{\tilde{\mu}(\boldsymbol{s}_{t}^{u};\theta)|u\in\mathcal{U},c_t^u=1\}
\end{aligned}
\end{equation}
where $\Omega_t$ is the set of all LRA outputs $\tilde{\mu}\left(\boldsymbol{s}_t^u\right)$ at the time period $t$, and the \textbf{Rank} operator computes the rank of $\tilde{\mu}\left(\boldsymbol{s}_t^u\right)$ in $\Omega_t$ ordered from the largest to the smallest. Now, we modify Eq. \eqref{eq:online-ranking-ideal} to a streaming fashion. Specifically, we introduce a detailed time $\tau$, where the time period $t$ is regarded as $\{\tau:t\leq \tau < t+1\}$. Assuming the requests come at $\tau$, the rank function in Eq. \eqref{eq:online-ranking-ideal} should be modified to
\begin{equation} \label{eq:online-ranking-wrong}
\begin{aligned}
a_\tau^u &= \textbf{1}\left\{ \textbf{Rank}\left(\tilde{\mu}\left(\boldsymbol{s}_{\tau}^{u}\right)| \Omega_{t}\right) \leq M\right\}\\
\text{s.t. }
\Omega_{t} &= \{\tilde{\mu}(\boldsymbol{s}_{\tau}^{u};\theta)|u\in\mathcal{U},c_\tau^u=1,t\leq \tau < t+1\}
\end{aligned}
\end{equation}
where the time of each request is replaced by the detailed time $\tau\in[t,t+1)$, while the $\Omega_t$ is still the set of the RLA values of all the requests arriving from $t$ to $t+1$.

Here, the challenge of streaming allocation is evident: we cannot obtain $a_\tau^u$ because it needs to obtain $\Omega_t$, which contains the requests coming after $\tau$. To tackle this challenge, we can approximate $\Omega_t$ by $\Omega_{t-1}$, because the real-time request volume at the time period $t$ will not differ much from the time period $t-1$.

We call $\Omega_t$ the request pool, which contains a set of requests with the LRA values. Then, the PoolRank formulation is

\noindent\textbf{PoolRank}:
\begin{equation} \label{eq:online-ranking-poolrank}
\begin{aligned}
a_\tau^u &= \textbf{1}\left\{ \textbf{Rank}\left(\tilde{\mu}\left(\boldsymbol{s}_{\tau}^{u}\right)| \Omega_{t-1}\right) \leq M\right\}\\
\text{s.t. }
\Omega_{t-1} &= \{\tilde{\mu}(\boldsymbol{s}_{\tau}^{u};\theta)|u\in\mathcal{U},c_\tau^u=1,t-1\leq \tau < t\}
\end{aligned}
\end{equation}

The online PoolRank algorithm is shown in Algorithm \ref{alg:poolrank}. Besides the main formulation in Eq. \eqref{eq:online-ranking-poolrank}, we also apply the following techniques to accelerate the computation:
\begin{itemize}
    \item \textbf{Bucketization}: the rank operator in Eq. \eqref{eq:online-ranking-poolrank} can be computed in a $O(1)$ time complexity by bucketization. We first bucketize $\tilde{\mu}\left(\boldsymbol{s}_\tau^u\right)$ to $I_\tau^u$. Then, a prefix array $A$ stores the number of requests in $\Omega_t$ with values larger than $I_\tau^u$. Then, the rank of the request can be obtained by a simple searching process, i.e., Line 6 in Algorithm \ref{alg:poolrank}.
    \item \textbf{Dual-Buffer Mechanism}: note that the prefix array $A$ can be computed asynchronously. We use $A_{online}$ in the online process and update $A_{online}$ once a new $A$ is available. This mechanism avoids updating the prefix array $A$ online, which significantly accelerate the PoolRank process.
\end{itemize}

As a summary of this section, we provide RPAF to solve the cache allocation problem. The prediction stage utilizes RLA to learn the critic and policy function, as shown in Algorithm \ref{alg:prediction}, while the allocation stage utilizes PoolRank to perform a streaming allocation, as shown in Algorithm \ref{alg:poolrank}. The next section will validate the effectiveness of RPAF by offline and online experiments.

\begin{algorithm}
\caption{PoolRank}
\label{alg:poolrank}
\begin{algorithmic}[1]
\STATE \textbf{Online process} (From the perspective of request level):
\STATE Input: $\left\{\tilde{\mu}\left(\boldsymbol{s}_{\tau};\theta\right)\right\}$ for each user $u \in \mathcal{U}$, at Time ${\tau}$.
\STATE Output: The final decision ${a}_{\tau}^u: a_{\tau}^u \in\{0,1\},\forall u\in\mathcal{U},\forall t $
\FOR{$\tau | t \leq \tau \leq t+1$ and $c_{\tau}^u = 1$}
    \STATE ${I}_{\tau}^u = \lfloor\tilde{\mu}\left(\boldsymbol{s}_{\tau};\theta\right) / \eta\rfloor$, where $\eta$ is the resolution.
    \STATE ${o}_\tau^u = {A}_{online}[I_\tau^u]$, where ${o}_\tau^u$ is ${Rank}(\tilde{\mu}\left(\boldsymbol{s}_{t}^{u}\right))$.
    \STATE ${C}_{t}[I_\tau^u]={C}_{t}[I_\tau^u]+1$, where $C$ is a counting array.
    \STATE $a_{\tau}^u = if\left({o}_{\tau}^u < M;1;0\right)$
\ENDFOR
\STATE
\STATE \textbf{Asynchronous process} (From a global system perspective):
\FOR{i in range (0, $\lfloor1/{\eta}\rfloor$)}
    \STATE ${A}_t = {A}_{t-1}[i] + C_{t}[i]$, where $A$ is a prefix array.
\ENDFOR
\STATE $A_{online} = A_{t}$
\end{algorithmic}
\end{algorithm}

\section{Experiments} \label{sec:experiment}
Extensive offline and online experiments are conducted to address the following research questions (RQs):
\begin{itemize}
\item \textbf{RQ1}: How does RPAF perform compared to other state-of-the-art methods?
\item \textbf{RQ2}: Does RLA effectively estimate the value of cache choices while considering the impact of allocation?
\item \textbf{RQ3}: Can the PoolRank algorithm better allocate real-time and cached recommendations streamingly?
\item \textbf{RQ4}: Can RPAF improve the performance of real-world recommender systems with a significant amount of cached recommendations?
\end{itemize}
\subsection{Offline Experiment Settings}
\subsubsection{Dataset and Metric}
We select the KuaiRand \cite{gao2022kuairec} dataset as the offline experimental dataset. KuaiRand is a public dataset from Kuaishou, comprising 27,285 users and 32,038,725 items. It provides contextual features of users and items and various user feedback signals such as watch time, likes, follows, and more. We regard watch time as the user feedback signal, i.e., the reward $r_t^u$ represents the watch time of User $u$ at Time $t$. The performance metric is the average \textbf{WatchTime}, i.e., the accumulated watching time per user.

\subsubsection{Simulator of Feedback}
The KuaiRand dataset provides user feedback on given items, but it does not consider the impact of the cache. To simulate the user's behavior under different cache choices, we construct a simulator to mimic the interaction between user requests and the recommender system. Upon receiving a user request, the recommender system determines the cache choice $a_t^u$. If $a_t^u=1$, the system selects the real-time recommendation, including matching and ranking. The real-time recommendation returns $L = 40$ items for each request, with the top $K = 8$ items being recommended to the user while the remaining items are put into the user's result cache. When $a_t^u=0$, the system chooses the cached recommendation, which returns the top $K$ items from the result cache and subsequently updates the result cache by removing these $K$ items. If the result cache is empty, the system must select real-time recommendations. If the computational resource of the current time period has been exhausted, the system must choose cached recommendations or fail to recommend. When receiving real-time recommendations, the simulator employs a pre-trained model to simulate user feedback. When receiving cached recommendations, the simulator modifies the original predicted feedback by a discounting factor shown in Figure \ref{fig:continuous-watch-time}. The simulator is used to evaluate the performance of different cache allocation methods.

\subsubsection{Details}
The KuaiRand dataset is an hourly dataset, with the number of requests per hour ranging from 1000 to 8000. We set the maximum real-time recommendations per hour as 4500. Hence, the peak ratio of cached recommendations is 43.7\%.
In the RL problem during the prediction stage, the state $\boldsymbol{s}_t$ contains the user profiles, browsing records, request context, and the current remaining content in the results cache. The action value $\tilde{a}_{t}^u$ indicates the probability that this request should opt for real-time recommendation instead of cached recommendation at the current time step. The reward $r_t^u$ is the watch time of the user at the recommended items.
we keep the same model backbone (i.e. the model architecture for the actors), which is a five-layer multi-layer perception, for all the baselines and the proposed method for a fair comparison.
For each experiment, 20 trials are conducted to calculate the mean and standard deviation. The other hyper-parameters are presented in Table 3 in Appendix C. 

\begin{table}[t]
\caption{Performance of offline experiments.}
\vspace{-3mm}
\centering
\begin{tabular}{c|c}
    \hline
    Methods & WatchTime (s)  \\
    \hline
    Greedy & 1750 \\
    \hline
    CRAS & 1821($\pm$21.8) \\
    \hline
    DCAF & 1827($\pm$41.2) \\
    \hline
    RL-MPCA & 1899($\pm$34.9) \\
    \hline
    RPAF-DDPG-w/o RLA & 1911($\pm$21.3) \\
    RPAF-TD3-w/o RLA & 1969($\pm$22.7) \\
    \hline
    RPAF-DDPG-KL & 2057($\pm$27.8) \\
    RPAF-DDPG-MSE & 2049($\pm$31.2) \\
    \hline
    RPAF-TD3-KL & 2128($\pm$19.3) \\
    \textbf{RPAF-TD3-MSE} & \textbf{2146($\pm$24.2)} \\
    \hline
    \textit{All Real-Time (Ideal)} & \textit{2347} \\
    \hline
\end{tabular}
\vspace{-2mm}
\label{table:offline-exp}
\end{table}

\begin{figure}[t]
    \centering
    \includegraphics[width=0.95\linewidth,trim=0 12 0 0, clip]{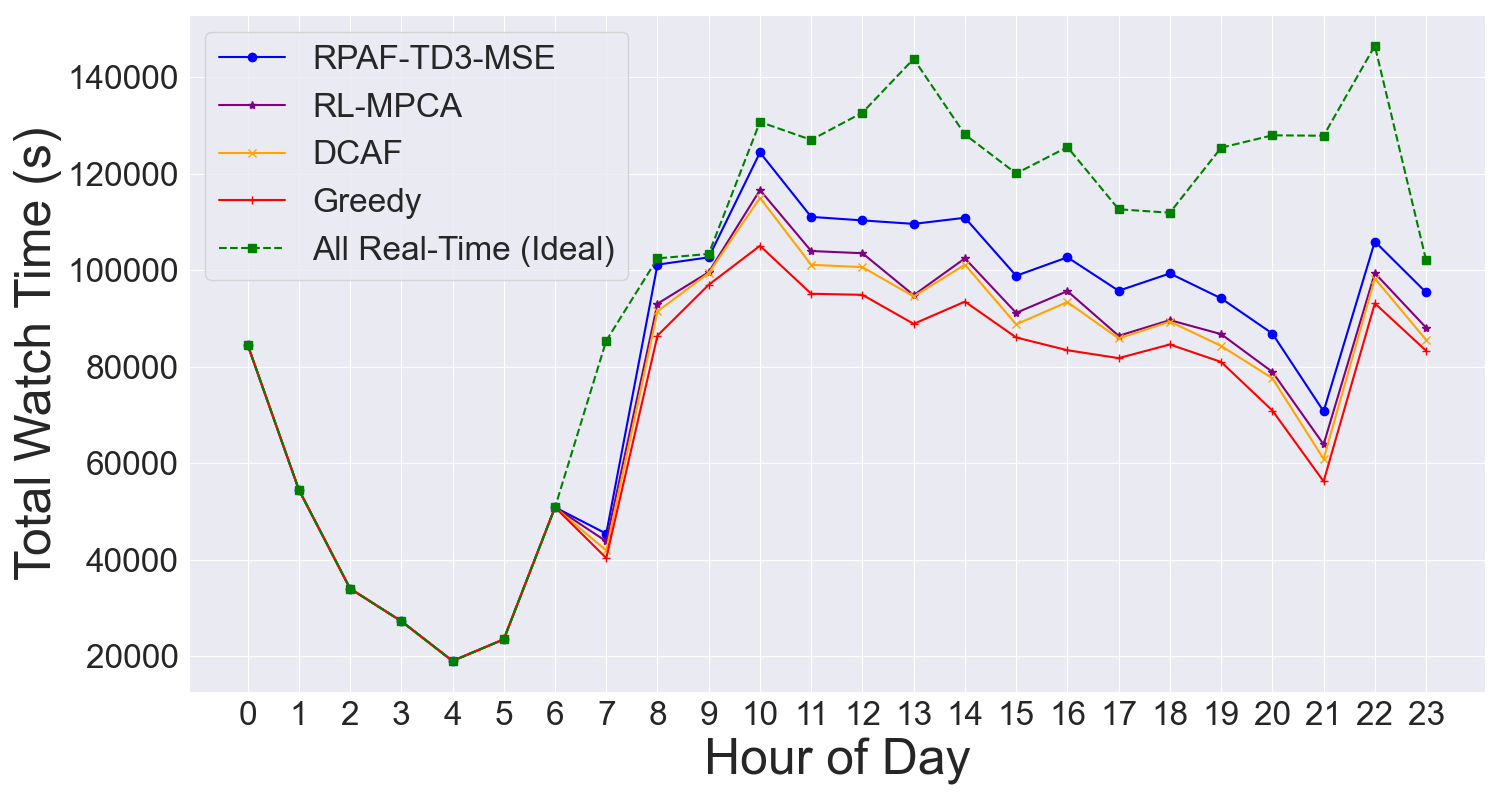}
    \caption{Performance comparison over one day.}
    \label{fig:base_reward_compare}
\vspace{-5mm}
\end{figure}

\subsubsection{Compared Methods} The following methods are compared:
\begin{itemize}
\item \textbf{Greedy}: The greedy approach prioritizes real-time recommendations over cached recommendations as long as the current system traffic load allows. Since requests come streaming, this method can always meet budget constraints.
\item \textbf{DCAF}: DCAF\cite{jiang2020dcaf} formulates the computational resource allocation problem as a constrained optimization problem and subsequently solves it with linear programming algorithms.
\item \textbf{CRAS}: CRAS\cite{yang2021computation} also provides a constrained optimization model and solves it by a multi-step feedback control method.
\item \textbf{RL-MPCA} \cite{zhou2023rl}: RL-MPCA formulates the computational resource allocation problem as a Weakly Coupled MDP problem and solves it with DQN.
\item \textbf{RPAF}: The proposed RPAF method. Several details are to be determined in the ablation study of RPAF.
\begin{itemize}
    \item The RL backbone of the actor-critic structure to learn Eq. \eqref{eq:cac-actor-sgd}\eqref{eq:cac-critic-sgd}: we choose two backbones, i.e. \textbf{DDPG} \cite{lillicrap2015continuous} and \textbf{TD3} \cite{fujimoto2018addressing}. Note that the choice of the RL backbones is independent of our work, and that we have merely chosen two representative methods as the backbone.
    \item The penalty function $T\left(\hat{x},x\right)$ in RLA: we choose three kinds of penalty functions, namely the KL divergence (denoted by \textbf{KL}), the MSE penalty, and no penalty function (denoted by \textbf{w/o RLA}).
\end{itemize}

\end{itemize}
    In the rest of this section, we use \textbf{RPAF-TD3-MSE} to denote the RPAF method with the TD3 backbone and the MSE penalty function, and other denotations are similar.
\begin{figure}[t]
    \centering
    \includegraphics[width=0.95\linewidth,trim=0 12 0 0, clip]{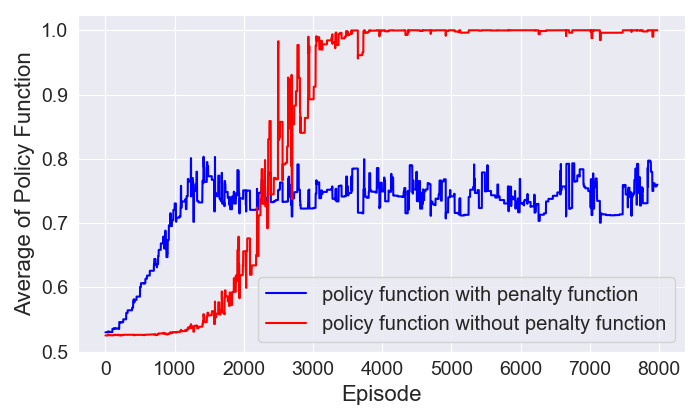}
    \caption{Training with and without penalty functions.}
    \label{fig:constraint_compare_compare}
    \vspace{-2mm}
\end{figure}
\begin{figure}[t]
    \centering
    \includegraphics[width=0.95\linewidth,trim=0 12 0 0, clip]{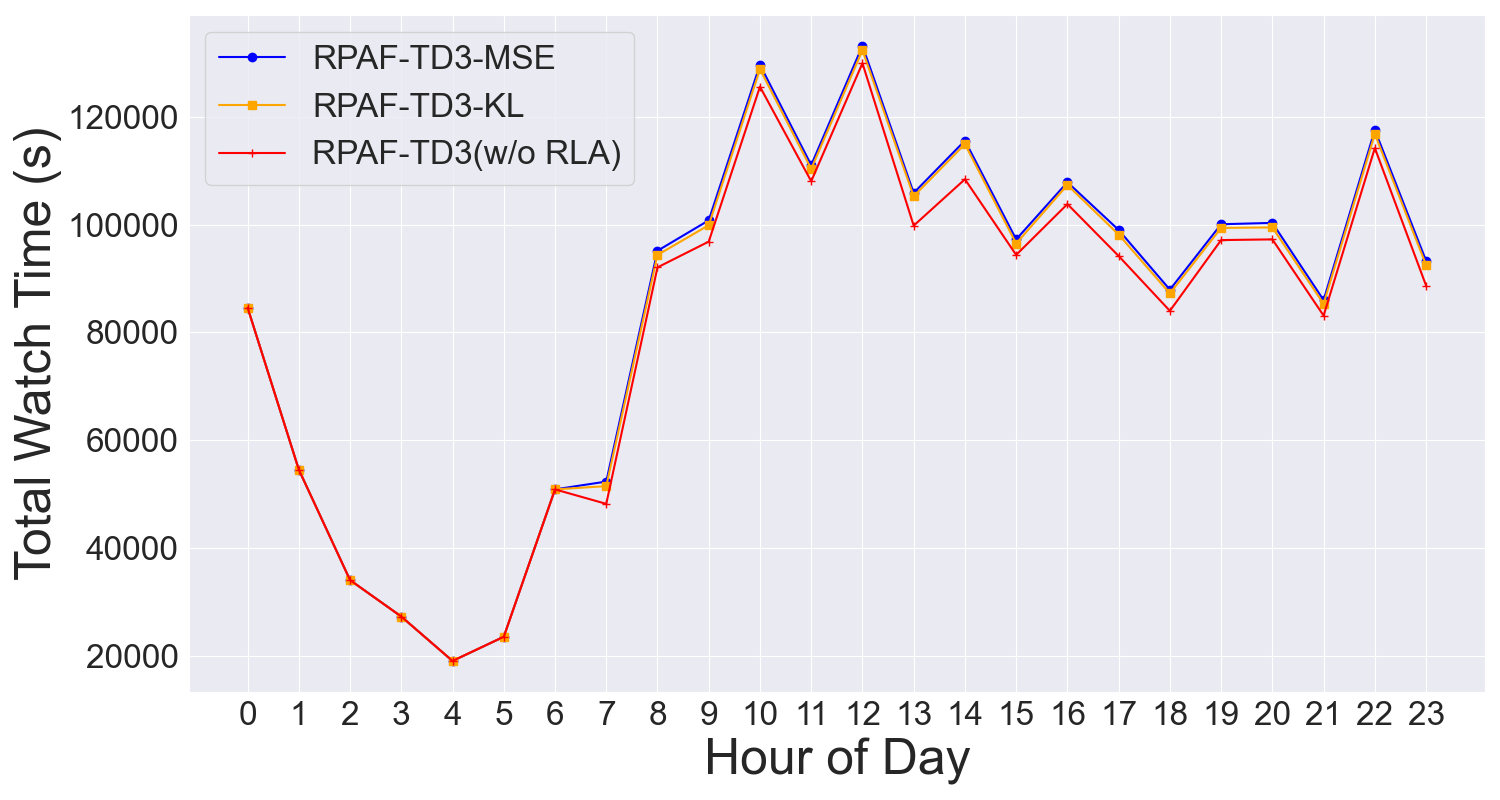}
    \caption{Performance comparison of penalty functions.}
    \label{fig:constraint_loss_compare}
    \vspace{-2mm}
\end{figure}
\subsection{Performance Comparison (RQ1)} 
We compare several methods as demonstrated in Table \ref{table:offline-exp}. To get the upper bound of the performance, an ideal strategy, \textbf{All Real-Time}, is included. This strategy chooses real-time recommendations regardless of the computational burden. As shown in Table \ref{table:offline-exp}, all dynamically allocated methods outperform the greedy approach. RPAF outperforms existing approaches, demonstrating the improvement achieved by considering the value-strategy dependency in cache allocation problems. Moreover, the comparison between RPAF with and without RLA demonstrates that RLA provides better performance by estimating the value of cache choice considering the constraint. A detailed discussion of RLA can be found in Section \ref{sec:experiment:rla}. Finally, RPAF with TD3 outperforms RPAF with DDPG, indicating that the RL backbone also impacts performance. Given that RPAF is independent of the RL backbone, any new studies on the actor-critic structure can be integrated into RPAF.

Additionally, Figure \ref{fig:base_reward_compare} illustrates the performance comparison across different time periods within a single day. During off-peak periods, for instance, from 0 AM to 6 AM, the total number of requests does not exceed the computational budget. Consequently, all the methods return real-time recommendations, resulting in the same reward. During peak periods, e.g., from 7 PM to 10 PM, cached recommendations become essential, and the performance of each method varies, of which RPAF achieves the best results.

\subsection{Impacts of RLA (RQ2)} \label{sec:experiment:rla}

This subsection validates the impacts of RLA. We regard the RPAF without RLA as RPAF with no penalty function $T(\hat{x},x)$. Figure \ref{fig:constraint_compare_compare} shows the convergence curve of the average output of the allocator $\tilde{\mu}\left(\boldsymbol{s}_t^u;\theta\right)$ with and without the penalty function $T(\hat{x},x)$. The policy function $\tilde{\mu}$ is shown to collapse to $1$ without the penalty function $T$. This means that all requests will be processed through real-time recommendation, which violates the budget constraint. In contrast, RLA with localized penalty functions is observed to converge to a reasonable ratio of real-time recommendations. As shown in Table \ref{table:offline-exp} and Figure \ref{fig:constraint_loss_compare}, RPAF with RLA significantly outperforms RPAF without RLA, while the two penalty functions chosen(KL and MSE) yield similar performance, indicating that RPAF is not largely dependent of the choice of penalty functions in RLA.

\subsection{Impacts of PoolRank (RQ3)}
Figure \ref{fig:computational_cost_compare} shows the number of real-time recommendations per hour under different methods. It is assumed that requests come in a streaming manner, and the system needs to determine the cache choice immediately upon receipt of a request, while ensuring that the total real-time recommendation per hour does not exceed the budget. If the system chooses a real-time recommendation while the system has no buffer, the request will still be downgraded into the cached recommendation. We also consider the RPAF without PoolRank, which determines cache choices solely in accordance with the allocator $\tilde{\mu}$. It is shown that DCAF, RL-MPCA, and RPAF without PoolRank cannot fully utilize all available real-time traffic, while the introduction of PoolRank enables the full exploitation of computational resources. This phenomenon shows the difference between batch allocation and streaming allocation. In streaming allocation, it is impossible to determine the cache choice according to all the requests in a time period. Existing methods may underestimate the available real-time computation resources in the future. PoolRank, however, achieves better performance due to its utilization of the most recent information on computational resources.

\begin{figure}[t]
    \centering
    \includegraphics[width=1.0\linewidth,trim=0 10 0 0, clip]{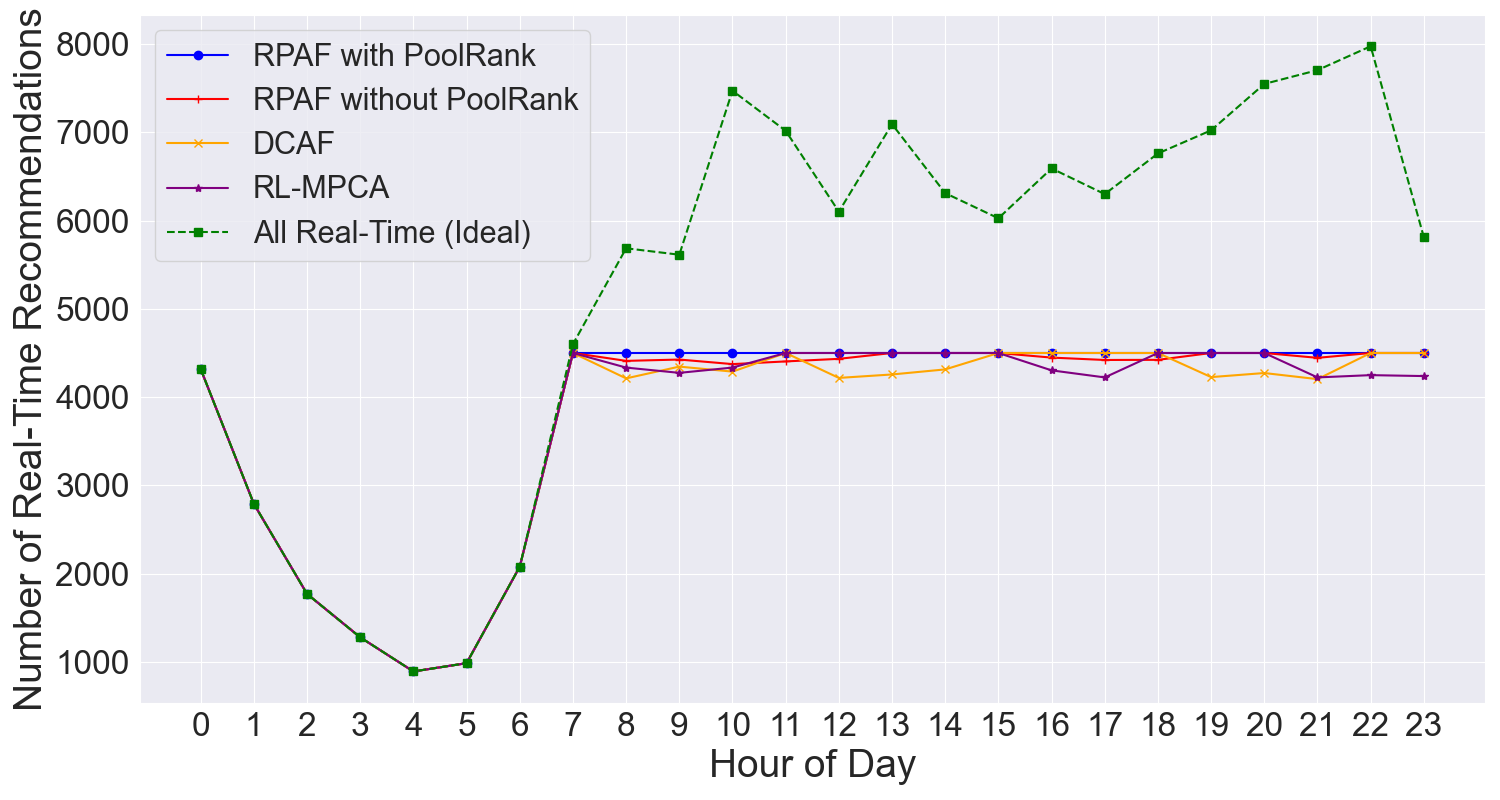}
    \caption{Computational cost comparison.}
    \label{fig:computational_cost_compare}
    \vspace{-3mm}
\end{figure}

\subsection{Online Experiments (RQ4)}
\begin{figure}[t]
    \centering
    \includegraphics[width=0.8\linewidth,trim=0 5 0 0, clip]{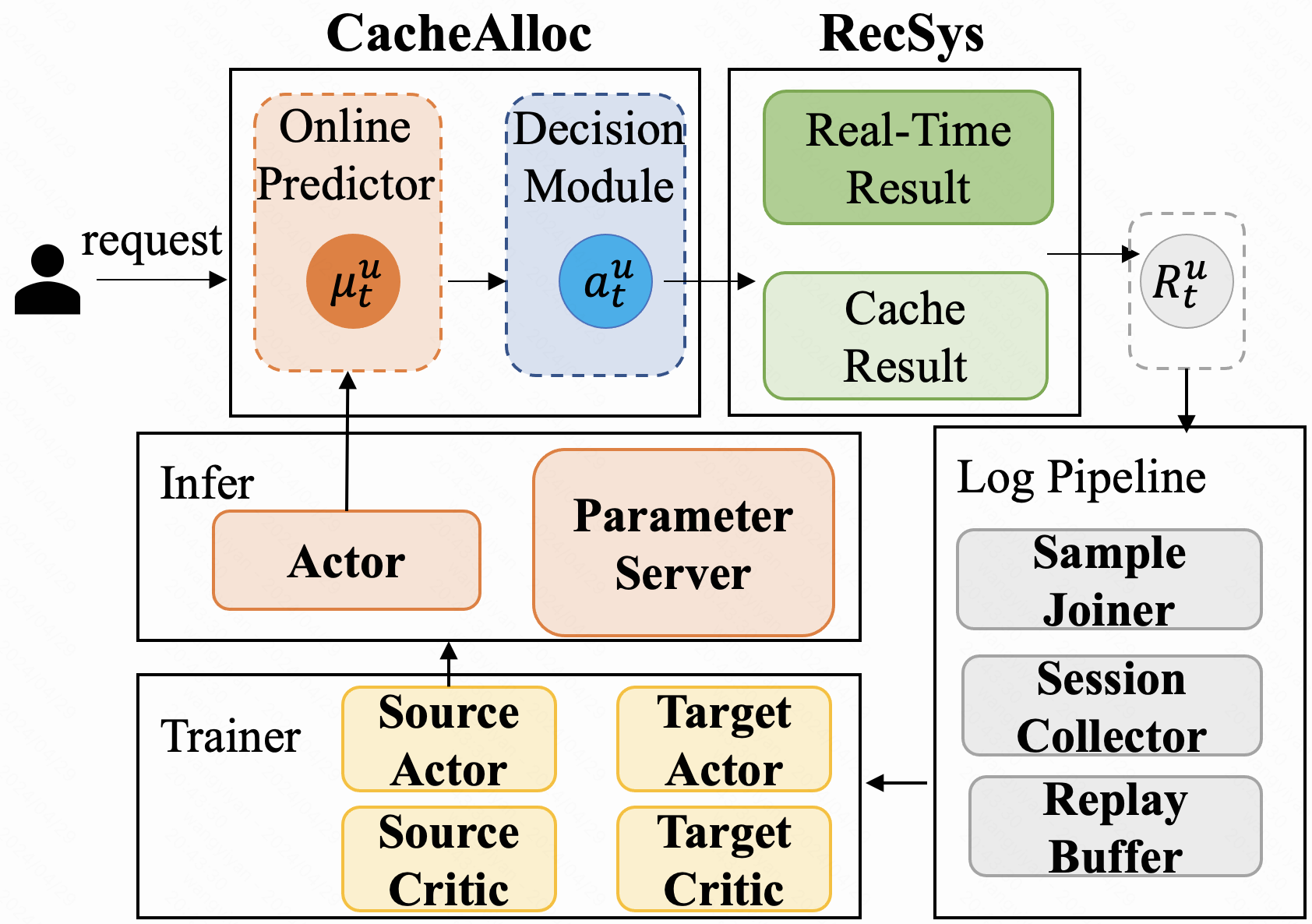}
    \caption{The Cache Allocation Online Process.}
    \label{fig:cache_alloc_online}
\end{figure}

We test the proposed RPAF method in Kwai, an online short video application with over 100 million users. The RPAF structure in the online system is shown in Figure \ref{fig:cache_alloc_online}. The model is trained continuously online. When a user's session ends, the session's data is immediately fed into the model for training, and the updated model is instantly deployed to the online service. We deploy the RPAF with the TD3 backbone and the MSE penalty function. There are 40\% cached recommendations in the peak period. In this scenario,  users' experience is highly consistent with their daily usage time. Thus, we regard the watch time per request as the immediate reward $r_t^u$. We sequentially conduct a series of experiments related to DCAF, RPAF (without PoolRank), and RPAF (with PoolRank). Each method is tested for 7 consecutive days, with the confidence intervals shown in Table \ref{table:live-exp-results}.

As shown in Table \ref{table:live-exp-results}, RPAF obtains a 1.13\% increase in daily watch time compared to the baseline and a 0.83\% increase compared to DCAF. We emphasize that in our system, a 0.1\% improvement holds statistical significance. Furthermore, a 150-day online experiment was conducted during which the experimental group employed DCAF, RPAF (without PoolRank) and RPAF (with PoolRank) in sequence, while the base group used the greedy strategy. Figure \ref{fig:dau} plots the comparison results of RPAF with the baseline, where the x-axis represents the number of days since deployment, and the y-axis indicates the percentage difference in the number of daily active users between the experimental group and the baseline group. Figure \ref{fig:dau} shows that the users in the experimental group are more likely to stay active on the platform, indicating an improvement in users' long-term values.

\begin{table}[t]
\caption{The online A/B test results.}
\centering
\begin{tabular}{c|c}
    \hline
    {Methods} & Daily WatchTime\\
    \hline
    baseline(greedy) &  - \\
    \hline
    DCAF & + 0.31\% \ \textcolor{gray}{[-0.12\%, 0.12\%]}\\
    \hline
    RPAF (without PoolRank) & + 0.91\% \ \textcolor{gray}{[-0.13\%, 0.13\%]}\\
    \hline
    \textbf{RPAF (with PoolRank)} & \textbf{+ 1.13\% \ \textcolor{gray}{[-0.11\%, 0.11\%]}} \\ \hline
\end{tabular}
\vspace{-2mm}
\label{table:live-exp-results}
\end{table}

\begin{figure}[t]
    \centering
    \includegraphics[width=0.9\linewidth,trim=0 12 0 0, clip]{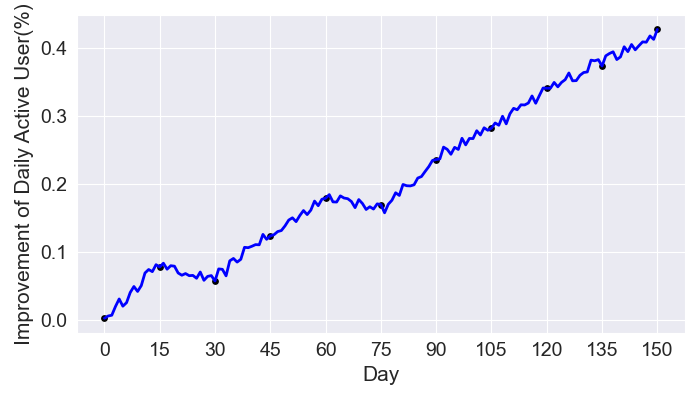}
    \caption{Results of DAU in online A/B experiments.}
    \label{fig:dau}
    \vspace{-3mm}
\end{figure}

\section{Conclusion}
This paper proposes RPAF, a cache allocation method that considers the value-strategy dependency and streaming allocation problem. RPAF is an RL-based two-stage framework containing prediction and allocation stages. The RLA is proposed to tackle the globality and strictness of budget constraints in the training of RPAF, and the PoolRank algorithm allocates the real-time and cached recommendations in a streaming manner. RPAF has proven its effectiveness in offline experiments and online A/B tests and has been deployed online in real-world applications, bringing a notable improvement.

\bibliographystyle{ACM-Reference-Format}
\bibliography{main}


\begin{thebibliography}{29}


\ifx \showCODEN    \undefined \def \showCODEN     #1{\unskip}     \fi
\ifx \showDOI      \undefined \def \showDOI       #1{#1}\fi
\ifx \showISBNx    \undefined \def \showISBNx     #1{\unskip}     \fi
\ifx \showISBNxiii \undefined \def \showISBNxiii  #1{\unskip}     \fi
\ifx \showISSN     \undefined \def \showISSN      #1{\unskip}     \fi
\ifx \showLCCN     \undefined \def \showLCCN      #1{\unskip}     \fi
\ifx \shownote     \undefined \def \shownote      #1{#1}          \fi
\ifx \showarticletitle \undefined \def \showarticletitle #1{#1}   \fi
\ifx \showURL      \undefined \def \showURL       {\relax}        \fi
\providecommand\bibfield[2]{#2}
\providecommand\bibinfo[2]{#2}
\providecommand\natexlab[1]{#1}
\providecommand\showeprint[2][]{arXiv:#2}

\bibitem[Afsar et~al\mbox{.}(2022)]%
        {afsar2022reinforcement}
\bibfield{author}{\bibinfo{person}{M~Mehdi Afsar}, \bibinfo{person}{Trafford Crump}, {and} \bibinfo{person}{Behrouz Far}.} \bibinfo{year}{2022}\natexlab{}.
\newblock \showarticletitle{Reinforcement learning based recommender systems: A survey}.
\newblock \bibinfo{journal}{\emph{Comput. Surveys}} \bibinfo{volume}{55}, \bibinfo{number}{7} (\bibinfo{year}{2022}), \bibinfo{pages}{1--38}.
\newblock


\bibitem[Cai et~al\mbox{.}(2023)]%
        {cai2023two}
\bibfield{author}{\bibinfo{person}{Qingpeng Cai}, \bibinfo{person}{Zhenghai Xue}, \bibinfo{person}{Chi Zhang}, \bibinfo{person}{Wanqi Xue}, \bibinfo{person}{Shuchang Liu}, \bibinfo{person}{Ruohan Zhan}, \bibinfo{person}{Xueliang Wang}, \bibinfo{person}{Tianyou Zuo}, \bibinfo{person}{Wentao Xie}, \bibinfo{person}{Dong Zheng}, {et~al\mbox{.}}} \bibinfo{year}{2023}\natexlab{}.
\newblock \showarticletitle{Two-stage constrained actor-critic for short video recommendation}. In \bibinfo{booktitle}{\emph{Proceedings of the ACM Web Conference 2023}}. \bibinfo{pages}{865--875}.
\newblock


\bibitem[Chen et~al\mbox{.}(2019)]%
        {chen2019large}
\bibfield{author}{\bibinfo{person}{Haokun Chen}, \bibinfo{person}{Xinyi Dai}, \bibinfo{person}{Han Cai}, \bibinfo{person}{Weinan Zhang}, \bibinfo{person}{Xuejian Wang}, \bibinfo{person}{Ruiming Tang}, \bibinfo{person}{Yuzhou Zhang}, {and} \bibinfo{person}{Yong Yu}.} \bibinfo{year}{2019}\natexlab{}.
\newblock \showarticletitle{Large-scale interactive recommendation with tree-structured policy gradient}. In \bibinfo{booktitle}{\emph{Proceedings of the AAAI conference on artificial intelligence}}, Vol.~\bibinfo{volume}{33}. \bibinfo{pages}{3312--3320}.
\newblock


\bibitem[Chen et~al\mbox{.}(2018)]%
        {chen2018stabilizing}
\bibfield{author}{\bibinfo{person}{Shi-Yong Chen}, \bibinfo{person}{Yang Yu}, \bibinfo{person}{Qing Da}, \bibinfo{person}{Jun Tan}, \bibinfo{person}{Hai-Kuan Huang}, {and} \bibinfo{person}{Hai-Hong Tang}.} \bibinfo{year}{2018}\natexlab{}.
\newblock \showarticletitle{Stabilizing reinforcement learning in dynamic environment with application to online recommendation}. In \bibinfo{booktitle}{\emph{Proceedings of the 24th ACM SIGKDD International Conference on Knowledge Discovery \& Data Mining}}. \bibinfo{pages}{1187--1196}.
\newblock


\bibitem[Chen et~al\mbox{.}(2024)]%
        {chen2024cache}
\bibfield{author}{\bibinfo{person}{Xiaoshuang Chen}, \bibinfo{person}{Gengrui Zhang}, \bibinfo{person}{Yao Wang}, \bibinfo{person}{Yulin Wu}, \bibinfo{person}{Kaiqiao Zhan}, {and} \bibinfo{person}{Ben Wang}.} \bibinfo{year}{2024}\natexlab{}.
\newblock \showarticletitle{Cache-Aware Reinforcement Learning in Large-Scale Recommender Systems}.
\newblock \bibinfo{journal}{\emph{arXiv preprint arXiv:2401.06470}} (\bibinfo{year}{2024}).
\newblock


\bibitem[Chow et~al\mbox{.}(2018)]%
        {chow2018risk}
\bibfield{author}{\bibinfo{person}{Yinlam Chow}, \bibinfo{person}{Mohammad Ghavamzadeh}, \bibinfo{person}{Lucas Janson}, {and} \bibinfo{person}{Marco Pavone}.} \bibinfo{year}{2018}\natexlab{}.
\newblock \showarticletitle{Risk-constrained reinforcement learning with percentile risk criteria}.
\newblock \bibinfo{journal}{\emph{Journal of Machine Learning Research}} \bibinfo{volume}{18}, \bibinfo{number}{167} (\bibinfo{year}{2018}), \bibinfo{pages}{1--51}.
\newblock


\bibitem[Dalal et~al\mbox{.}(2018)]%
        {dalal2018safe}
\bibfield{author}{\bibinfo{person}{Gal Dalal}, \bibinfo{person}{Krishnamurthy Dvijotham}, \bibinfo{person}{Matej Vecerik}, \bibinfo{person}{Todd Hester}, \bibinfo{person}{Cosmin Paduraru}, {and} \bibinfo{person}{Yuval Tassa}.} \bibinfo{year}{2018}\natexlab{}.
\newblock \showarticletitle{Safe exploration in continuous action spaces}.
\newblock \bibinfo{journal}{\emph{arXiv preprint arXiv:1801.08757}} (\bibinfo{year}{2018}).
\newblock


\bibitem[El~Shar and Jiang(2024)]%
        {el2024weakly}
\bibfield{author}{\bibinfo{person}{Ibrahim El~Shar} {and} \bibinfo{person}{Daniel Jiang}.} \bibinfo{year}{2024}\natexlab{}.
\newblock \showarticletitle{Weakly Coupled Deep Q-Networks}.
\newblock \bibinfo{journal}{\emph{Advances in Neural Information Processing Systems}}  \bibinfo{volume}{36} (\bibinfo{year}{2024}).
\newblock


\bibitem[Fujimoto et~al\mbox{.}(2018)]%
        {fujimoto2018addressing}
\bibfield{author}{\bibinfo{person}{Scott Fujimoto}, \bibinfo{person}{Herke Hoof}, {and} \bibinfo{person}{David Meger}.} \bibinfo{year}{2018}\natexlab{}.
\newblock \showarticletitle{Addressing function approximation error in actor-critic methods}. In \bibinfo{booktitle}{\emph{International conference on machine learning}}. PMLR, \bibinfo{pages}{1587--1596}.
\newblock


\bibitem[Gao et~al\mbox{.}(2022)]%
        {gao2022kuairec}
\bibfield{author}{\bibinfo{person}{Chongming Gao}, \bibinfo{person}{Shijun Li}, \bibinfo{person}{Wenqiang Lei}, \bibinfo{person}{Jiawei Chen}, \bibinfo{person}{Biao Li}, \bibinfo{person}{Peng Jiang}, \bibinfo{person}{Xiangnan He}, \bibinfo{person}{Jiaxin Mao}, {and} \bibinfo{person}{Tat-Seng Chua}.} \bibinfo{year}{2022}\natexlab{}.
\newblock \showarticletitle{KuaiRec: A fully-observed dataset and insights for evaluating recommender systems}. In \bibinfo{booktitle}{\emph{Proceedings of the 31st ACM International Conference on Information \& Knowledge Management}}. \bibinfo{pages}{540--550}.
\newblock


\bibitem[Garc{\i}a and Fern{\'a}ndez(2015)]%
        {garcia2015comprehensive}
\bibfield{author}{\bibinfo{person}{Javier Garc{\i}a} {and} \bibinfo{person}{Fernando Fern{\'a}ndez}.} \bibinfo{year}{2015}\natexlab{}.
\newblock \showarticletitle{A comprehensive survey on safe reinforcement learning}.
\newblock \bibinfo{journal}{\emph{Journal of Machine Learning Research}} \bibinfo{volume}{16}, \bibinfo{number}{1} (\bibinfo{year}{2015}), \bibinfo{pages}{1437--1480}.
\newblock


\bibitem[Haarnoja et~al\mbox{.}(2018)]%
        {haarnoja2018soft}
\bibfield{author}{\bibinfo{person}{Tuomas Haarnoja}, \bibinfo{person}{Aurick Zhou}, \bibinfo{person}{Pieter Abbeel}, {and} \bibinfo{person}{Sergey Levine}.} \bibinfo{year}{2018}\natexlab{}.
\newblock \showarticletitle{Soft actor-critic: Off-policy maximum entropy deep reinforcement learning with a stochastic actor}. In \bibinfo{booktitle}{\emph{International conference on machine learning}}. PMLR, \bibinfo{pages}{1861--1870}.
\newblock


\bibitem[Jiang et~al\mbox{.}(2020)]%
        {jiang2020dcaf}
\bibfield{author}{\bibinfo{person}{Biye Jiang}, \bibinfo{person}{Pengye Zhang}, \bibinfo{person}{Rihan Chen}, \bibinfo{person}{Xinchen Luo}, \bibinfo{person}{Yin Yang}, \bibinfo{person}{Guan Wang}, \bibinfo{person}{Guorui Zhou}, \bibinfo{person}{Xiaoqiang Zhu}, {and} \bibinfo{person}{Kun Gai}.} \bibinfo{year}{2020}\natexlab{}.
\newblock \showarticletitle{Dcaf: a dynamic computation allocation framework for online serving system}.
\newblock \bibinfo{journal}{\emph{arXiv preprint arXiv:2006.09684}} (\bibinfo{year}{2020}).
\newblock


\bibitem[Johnson et~al\mbox{.}(2019)]%
        {johnson2019billion}
\bibfield{author}{\bibinfo{person}{Jeff Johnson}, \bibinfo{person}{Matthijs Douze}, {and} \bibinfo{person}{Herv{\'e} J{\'e}gou}.} \bibinfo{year}{2019}\natexlab{}.
\newblock \showarticletitle{Billion-scale similarity search with gpus}.
\newblock \bibinfo{journal}{\emph{IEEE Transactions on Big Data}} \bibinfo{volume}{7}, \bibinfo{number}{3} (\bibinfo{year}{2019}), \bibinfo{pages}{535--547}.
\newblock


\bibitem[Kalweit et~al\mbox{.}(2020)]%
        {kalweit2020deep}
\bibfield{author}{\bibinfo{person}{Gabriel Kalweit}, \bibinfo{person}{Maria Huegle}, \bibinfo{person}{Moritz Werling}, {and} \bibinfo{person}{Joschka Boedecker}.} \bibinfo{year}{2020}\natexlab{}.
\newblock \showarticletitle{Deep constrained q-learning}.
\newblock \bibinfo{journal}{\emph{arXiv preprint arXiv:2003.09398}} (\bibinfo{year}{2020}).
\newblock


\bibitem[Lillicrap et~al\mbox{.}(2015)]%
        {lillicrap2015continuous}
\bibfield{author}{\bibinfo{person}{Timothy~P Lillicrap}, \bibinfo{person}{Jonathan~J Hunt}, \bibinfo{person}{Alexander Pritzel}, \bibinfo{person}{Nicolas Heess}, \bibinfo{person}{Tom Erez}, \bibinfo{person}{Yuval Tassa}, \bibinfo{person}{David Silver}, {and} \bibinfo{person}{Daan Wierstra}.} \bibinfo{year}{2015}\natexlab{}.
\newblock \showarticletitle{Continuous control with deep reinforcement learning}.
\newblock \bibinfo{journal}{\emph{arXiv preprint arXiv:1509.02971}} (\bibinfo{year}{2015}).
\newblock


\bibitem[Liu et~al\mbox{.}(2018)]%
        {liu2018stamp}
\bibfield{author}{\bibinfo{person}{Qiao Liu}, \bibinfo{person}{Yifu Zeng}, \bibinfo{person}{Refuoe Mokhosi}, {and} \bibinfo{person}{Haibin Zhang}.} \bibinfo{year}{2018}\natexlab{}.
\newblock \showarticletitle{STAMP: short-term attention/memory priority model for session-based recommendation}. In \bibinfo{booktitle}{\emph{Proceedings of the 24th ACM SIGKDD international conference on knowledge discovery \& data mining}}. \bibinfo{pages}{1831--1839}.
\newblock


\bibitem[Liu et~al\mbox{.}(2017)]%
        {liu2017cascade}
\bibfield{author}{\bibinfo{person}{Shichen Liu}, \bibinfo{person}{Fei Xiao}, \bibinfo{person}{Wenwu Ou}, {and} \bibinfo{person}{Luo Si}.} \bibinfo{year}{2017}\natexlab{}.
\newblock \showarticletitle{Cascade ranking for operational e-commerce search}. In \bibinfo{booktitle}{\emph{Proceedings of the 23rd ACM SIGKDD International Conference on Knowledge Discovery and Data Mining}}. \bibinfo{pages}{1557--1565}.
\newblock


\bibitem[Liu et~al\mbox{.}(2021)]%
        {liu2021policy}
\bibfield{author}{\bibinfo{person}{Yongshuai Liu}, \bibinfo{person}{Avishai Halev}, {and} \bibinfo{person}{Xin Liu}.} \bibinfo{year}{2021}\natexlab{}.
\newblock \showarticletitle{Policy learning with constraints in model-free reinforcement learning: A survey}. In \bibinfo{booktitle}{\emph{The 30th international joint conference on artificial intelligence (ijcai)}}.
\newblock


\bibitem[Lu et~al\mbox{.}(2023)]%
        {lu2023greenflow}
\bibfield{author}{\bibinfo{person}{Xingyu Lu}, \bibinfo{person}{Zhining Liu}, \bibinfo{person}{Yanchu Guan}, \bibinfo{person}{Hongxuan Zhang}, \bibinfo{person}{Chenyi Zhuang}, \bibinfo{person}{Wenqi Ma}, \bibinfo{person}{Yize Tan}, \bibinfo{person}{Jinjie Gu}, {and} \bibinfo{person}{Guannan Zhang}.} \bibinfo{year}{2023}\natexlab{}.
\newblock \showarticletitle{GreenFlow: a computation allocation framework for building environmentally sound recommendation system}.
\newblock \bibinfo{journal}{\emph{arXiv preprint arXiv:2312.16176}} (\bibinfo{year}{2023}).
\newblock


\bibitem[Ma et~al\mbox{.}(2020)]%
        {ma2020off}
\bibfield{author}{\bibinfo{person}{Jiaqi Ma}, \bibinfo{person}{Zhe Zhao}, \bibinfo{person}{Xinyang Yi}, \bibinfo{person}{Ji Yang}, \bibinfo{person}{Minmin Chen}, \bibinfo{person}{Jiaxi Tang}, \bibinfo{person}{Lichan Hong}, {and} \bibinfo{person}{Ed~H Chi}.} \bibinfo{year}{2020}\natexlab{}.
\newblock \showarticletitle{Off-policy learning in two-stage recommender systems}. In \bibinfo{booktitle}{\emph{Proceedings of The Web Conference 2020}}. \bibinfo{pages}{463--473}.
\newblock


\bibitem[Pi et~al\mbox{.}(2020)]%
        {pi2020search}
\bibfield{author}{\bibinfo{person}{Qi Pi}, \bibinfo{person}{Guorui Zhou}, \bibinfo{person}{Yujing Zhang}, \bibinfo{person}{Zhe Wang}, \bibinfo{person}{Lejian Ren}, \bibinfo{person}{Ying Fan}, \bibinfo{person}{Xiaoqiang Zhu}, {and} \bibinfo{person}{Kun Gai}.} \bibinfo{year}{2020}\natexlab{}.
\newblock \showarticletitle{Search-based user interest modeling with lifelong sequential behavior data for click-through rate prediction}. In \bibinfo{booktitle}{\emph{Proceedings of the 29th ACM International Conference on Information \& Knowledge Management}}. \bibinfo{pages}{2685--2692}.
\newblock


\bibitem[Sutton and Barto(2018)]%
        {sutton2018reinforcement}
\bibfield{author}{\bibinfo{person}{Richard~S Sutton} {and} \bibinfo{person}{Andrew~G Barto}.} \bibinfo{year}{2018}\natexlab{}.
\newblock \bibinfo{booktitle}{\emph{Reinforcement learning: An introduction}}.
\newblock \bibinfo{publisher}{MIT press}.
\newblock


\bibitem[Tessler et~al\mbox{.}(2018)]%
        {tessler2018reward}
\bibfield{author}{\bibinfo{person}{Chen Tessler}, \bibinfo{person}{Daniel~J Mankowitz}, {and} \bibinfo{person}{Shie Mannor}.} \bibinfo{year}{2018}\natexlab{}.
\newblock \showarticletitle{Reward constrained policy optimization}.
\newblock \bibinfo{journal}{\emph{arXiv preprint arXiv:1805.11074}} (\bibinfo{year}{2018}).
\newblock


\bibitem[Wang et~al\mbox{.}(2020)]%
        {wang2020cold}
\bibfield{author}{\bibinfo{person}{Zhe Wang}, \bibinfo{person}{Liqin Zhao}, \bibinfo{person}{Biye Jiang}, \bibinfo{person}{Guorui Zhou}, \bibinfo{person}{Xiaoqiang Zhu}, {and} \bibinfo{person}{Kun Gai}.} \bibinfo{year}{2020}\natexlab{}.
\newblock \showarticletitle{Cold: Towards the next generation of pre-ranking system}.
\newblock \bibinfo{journal}{\emph{arXiv preprint arXiv:2007.16122}} (\bibinfo{year}{2020}).
\newblock


\bibitem[Yang et~al\mbox{.}(2021)]%
        {yang2021computation}
\bibfield{author}{\bibinfo{person}{Xun Yang}, \bibinfo{person}{Yunli Wang}, \bibinfo{person}{Cheng Chen}, \bibinfo{person}{Qing Tan}, \bibinfo{person}{Chuan Yu}, \bibinfo{person}{Jian Xu}, {and} \bibinfo{person}{Xiaoqiang Zhu}.} \bibinfo{year}{2021}\natexlab{}.
\newblock \showarticletitle{Computation resource allocation solution in recommender systems}.
\newblock \bibinfo{journal}{\emph{arXiv preprint arXiv:2103.02259}} (\bibinfo{year}{2021}).
\newblock


\bibitem[Zhao et~al\mbox{.}(2018)]%
        {zhao2018recommendations}
\bibfield{author}{\bibinfo{person}{Xiangyu Zhao}, \bibinfo{person}{Liang Zhang}, \bibinfo{person}{Zhuoye Ding}, \bibinfo{person}{Long Xia}, \bibinfo{person}{Jiliang Tang}, {and} \bibinfo{person}{Dawei Yin}.} \bibinfo{year}{2018}\natexlab{}.
\newblock \showarticletitle{Recommendations with negative feedback via pairwise deep reinforcement learning}. In \bibinfo{booktitle}{\emph{Proceedings of the 24th ACM SIGKDD international conference on knowledge discovery \& data mining}}. \bibinfo{pages}{1040--1048}.
\newblock


\bibitem[Zhou et~al\mbox{.}(2023)]%
        {zhou2023rl}
\bibfield{author}{\bibinfo{person}{Jiahong Zhou}, \bibinfo{person}{Shunhui Mao}, \bibinfo{person}{Guoliang Yang}, \bibinfo{person}{Bo Tang}, \bibinfo{person}{Qianlong Xie}, \bibinfo{person}{Lebin Lin}, \bibinfo{person}{Xingxing Wang}, {and} \bibinfo{person}{Dong Wang}.} \bibinfo{year}{2023}\natexlab{}.
\newblock \showarticletitle{Rl-mpca: A reinforcement learning based multi-phase computation allocation approach for recommender systems}. In \bibinfo{booktitle}{\emph{Proceedings of the ACM Web Conference 2023}}. \bibinfo{pages}{3214--3224}.
\newblock


\bibitem[Zhu et~al\mbox{.}(2018)]%
        {zhu2018learning}
\bibfield{author}{\bibinfo{person}{Han Zhu}, \bibinfo{person}{Xiang Li}, \bibinfo{person}{Pengye Zhang}, \bibinfo{person}{Guozheng Li}, \bibinfo{person}{Jie He}, \bibinfo{person}{Han Li}, {and} \bibinfo{person}{Kun Gai}.} \bibinfo{year}{2018}\natexlab{}.
\newblock \showarticletitle{Learning tree-based deep model for recommender systems}. In \bibinfo{booktitle}{\emph{Proceedings of the 24th ACM SIGKDD international conference on knowledge discovery \& data mining}}. \bibinfo{pages}{1079--1088}.
\newblock


\end{thebibliography}

\newpage

\onecolumn 

\appendix
\section{Proof of Proposition \ref{prop:trivial}} \label{appendix:proof:trivial}
We prove this by contradiction. Assume the optimal solution contains two users $u_1$ and $u_2$ with $c_t^{u_1}=c_t^{u_2}=1$, satisfying
\begin{equation}
a_t^{u_1} = 1, a_t^{u_2} = 0
\end{equation}
\begin{equation} \label{eq:proof:trivial:assumption-2}
\mathbb{E}\left[R_t^{u_1} |a_t^{u_1}=1\right] - \mathbb{E}\left[R_t^{u_1} |a_t^{u_1}=0\right] < \mathbb{E}\left[R_t^{u_2} |a_t^{u_2}=1\right] - \mathbb{E}\left[R_t^{u_2} |a_t^{u_2}=0\right]
\end{equation}

Then, we consider a new solution $a_t^{*u_1} = 0,a_t^{*u_2}=1$ with other cache choices unchanged. According to Eq. \eqref{eq:proof:trivial:assumption-2}, we have 
\begin{equation}
\begin{aligned}
&\sum_{u\in\mathcal{U}}c_t^u\mathbb{E}\left[R_t^u|a_t^{*u}\right] - \sum_{u\in\mathcal{U}}c_t^u\mathbb{E}\left[R_t^u|a_t^u\right] \\
=&\mathbb{E}\left[R_t^{u_1} |a_t^{*u_1}=0\right] + \mathbb{E}\left[R_t^{u_2} |a_t^{*u_2}=1\right] - \left\{\mathbb{E}\left[R_t^{u_1} |a_t^{u_1}=1\right] + \mathbb{E}\left[R_t^{u_2} |a_t^{u_2}=0\right]\right\} \\
=& \left\{\mathbb{E}\left[R_t^{u_2} |a_t^{*u_2}=1\right] - \mathbb{E}\left[R_t^{u_2} |a_t^{u_2}=0\right]\right\} - \left\{\mathbb{E}\left[R_t^{u_1} |a_t^{u_1}=1\right] - \mathbb{E}\left[R_t^{u_1} |a_t^{*u_1}=0\right]\right\}\\
>&0
\end{aligned}
\end{equation}
which means $a_t^{*u}$ is a better solution, leading to the contradiction.

\section{Proof of Proposition \ref{prop:consistency}} \label{appendix:proof:consistency}
According to Prop. \ref{prop:trivial}, the solution to Eq. \eqref{eq:strict-allocator} is
\begin{equation}
a_t^u = \textbf{arg-top}_M\left\{\left.Q\left(\boldsymbol{s}_t^u,1;\phi\right)-Q\left(\boldsymbol{s}_t^u,0;\phi\right)\right|u\in\mathcal{U},c_t^u=1\right\}
\end{equation}
Then, it suffices to prove that $Q\left(\boldsymbol{s}_t^u,1;\phi\right)-Q\left(\boldsymbol{s}_t^u,0;\phi\right)$ is monotonically increasing with regard to  $\tilde{\mu}\left(\boldsymbol{s}_t^u;\theta\right)$. Actually, according to the actor loss in Eq. \eqref{eq:cac-actor-sgd}, the optimal solution to $\tilde{\mu}\left(\boldsymbol{s}_t^u;\theta\right)$ satisfies $\partial L_a/\partial \tilde{\mu}=0$, i.e.
\begin{equation} \label{eq:minima-grad}
    \frac{\partial Q}{\partial \tilde{\mu}} = \alpha \frac{\partial T}{\partial \tilde{\mu}}
\end{equation}
Moreover, according to the definition of the critic function $Q$ with regard to the RLA in Eq. \eqref{eq:critic-function-tilde}, we have
\begin{equation}
    Q\left(\boldsymbol{s}_t^u,\tilde{\mu}\left(\boldsymbol{s}_t^u;\theta\right);\phi\right) = \tilde{\mu}\left(\boldsymbol{s}_t^u;\theta\right)Q\left(\boldsymbol{s}_t^u,1;\phi\right) + \left(1-\tilde{\mu}\left(\boldsymbol{s}_t^u;\theta\right)\right)Q\left(\boldsymbol{s}_t^u,0;\phi\right)
\end{equation}
Therefore we have
\begin{equation}
    \frac{\partial Q}{\partial \tilde{\mu}} = Q\left(\boldsymbol{s}_t^u,1;\phi\right) - Q\left(\boldsymbol{s}_t^u,0;\phi\right)
\end{equation}
and the optimal solution in Eq. \eqref{eq:minima-grad} becomes:
\begin{equation}
    \alpha \frac{\partial T}{\partial \tilde{\mu}} = Q\left(\boldsymbol{s}_t^u,1;\phi\right) - Q\left(\boldsymbol{s}_t^u,0;\phi\right)
\end{equation}
Moreover, since the penalty function $T$ is convex and $\alpha > 0$, $\alpha\partial T/\partial \tilde{\mu}$ is monotonically increasing with regard to $\tilde{\mu}$. Therefore, the larger $Q\left(\boldsymbol{s}_t^u,1;\phi\right) - Q\left(\boldsymbol{s}_t^u,0;\phi\right)$ is, the larger the solution $\tilde{\mu}$ is, which finishes the proof.

\section{\centering{Hyper Parameters}} \label{appendix:hyper-params}

\begin{table}[h]
\centering
\begin{tabular}{l|l}
    \hline
    Optimizer & Adam \\
    Actor Learning Rate & 0.0001 \\
    Critic Learning Rate & 0.0002 \\
    Number of Agents & 2 \\
    Action Dimensions Of Actor & 1 \\
    Discount Factor & 0.9 \\
    Action Upper Bound & 1.0 \\
    Replay Buffer Size & $1*10^{6}$ \\
    Train Batch Size & 1024 \\
    Fine-Tuning & True \\
    Normalized Observations & True \\
    Training Platform & Tensorflow \\ \hline
\end{tabular}
\caption{The Hyper-parameters of RPAF.}
\label{tab:hyper-params}
\end{table}

\end{document}